\newtheorem{theorem}{Theorem}
\newtheorem{proposition}[theorem]{Proposition}
\newtheorem{definition}[theorem]{Definition}
\newtheorem{corollary}[theorem]{Corollary}
\newcommand{\test}{\ensuremath{*}}
\newcommand{\METHOD}{decoupled sampling}
\newcommand{\dsgp}{DSGP}
\newcommand{\rff}{RFF}
\newcommand{\numBasisTotal}{b}
\renewcommand{\icmlcorrespondingauthor}[1]{
\ifdefined\isaccepted
 \ifdefined\icmlcorrespondingauthor@text
   \g@addto@macro\icmlcorrespondingauthor@text{#1}
 \else
   \gdef\icmlcorrespondingauthor@text{#1}
 \fi
}
\begin{document}

\twocolumn[
\icmltitle{Efficiently Sampling Functions from Gaussian Process Posteriors}

\icmlsetsymbol{equal}{*}
\begin{icmlauthorlist}
\icmlauthor{James T. Wilson}{equal,icl}
\icmlauthor{Viacheslav Borovitskiy}{equal,spbu,pdmi}
\icmlauthor{Alexander Terenin}{equal,icl}
\\
\icmlauthor{Peter Mostowsky}{equal,spbu}
\icmlauthor{Marc Peter Deisenroth}{ucl}
\end{icmlauthorlist}

\icmlaffiliation{icl}{Imperial College London}
\icmlaffiliation{ucl}{University College London}
\icmlaffiliation{spbu}{St. Petersburg State University}
\icmlaffiliation{pdmi}{St. Petersburg Department of Steklov Mathematical Institute of Russian Academy of Sciences} 

\icmlcorrespondingauthor{
\email{j.wilson17@imperial.ac.uk}, \quad
\email{viacheslav.borovitskiy@gmail.com},
\email{a.terenin17@imperial.ac.uk},
and \email{pmostowsky@gmail.com}}
\strut
]

\printAffiliationsAndNotice{\icmlEqualContribution}

\begin{abstract}
Gaussian processes are the gold standard for many real-world modeling problems, especially in cases where a model's success hinges upon its ability to faithfully represent predictive uncertainty. These problems typically exist as parts of larger frameworks, wherein quantities of interest are ultimately defined by integrating over posterior distributions. These quantities are frequently intractable, motivating the use of Monte Carlo methods. Despite substantial progress in scaling up Gaussian processes to large training sets, methods for accurately generating draws from their posterior distributions still scale cubically in the number of test locations. We identify a decomposition of Gaussian processes that naturally lends itself to scalable sampling by separating out the prior from the data. Building off of this factorization, we propose an easy-to-use and general-purpose approach for fast posterior sampling, which seamlessly pairs with sparse approximations to afford scalability both during training and at test time. In a series of experiments designed to test competing sampling schemes' statistical properties and practical ramifications, we demonstrate how \emph{decoupled sample paths} accurately represent Gaussian process posteriors at a fraction of the usual cost.
\end{abstract}

\section{Introduction}
Gaussian processes (GPs) are a powerful framework for reasoning about unknown functions $f$ given partial knowledge of their behaviors. In decision-making scenarios, well-calibrated predictive uncertainty is crucial for balancing important tradeoffs, such as exploration versus exploitation and long-term versus short-term rewards.
Bayesian methods naturally strike this balance \cite{ghavamzadeh2015bayesian, shahriari2015taking}.
While many quantities of interest defined with respect to Bayesian posteriors cannot be computed analytically (such as expectations of nonlinear functionals), they may be readily estimated via Monte Carlo methods.
Depending on this sample-based estimator's relative cost and statistical behavior, its performance may vary from state-of-the-art to method-of-last-resort.

Unlike methods for scalable training \cite{hensman13,wang2019exact}, techniques for efficiently sampling from GP posteriors have received relatively little attention in the machine learning literature. 
On the one hand, na\"{i}ve approaches to sampling are statistically well-behaved, but scale poorly owing to their need to solve for increasingly large linear systems at test time.
On the other hand, fast approximation strategies using Fourier features \cite{rahimi08} avoid costly matrix operations, but are prone to misrepresenting predictive posteriors \cite{wang18,mutny18,calandriello19}. Investigating their respective behaviors, we find that many of these strategies are complementary, with one often excelling where others falter. 
Motivated by this comparison of strengths and weaknesses, we leverage a lesser known decomposition of GP posteriors that allows us to incorporate the best of both worlds.

Our approach centers on the observation that we may implicitly condition Gaussian random variables by combining them with an explicit corrective term.
Translating this intuition to GPs, we may decompose the posterior as the sum of a prior and an update. By doing so, we are able to separately represent each of these terms using a basis well-suited for sampling.
This notion of "conditioning by kriging" was first presented by Matheron in the early 1970s, with various applications to geostatistics \cite{journel1978mining, defouquet94, chiles09}.
The concept was later rediscovered in astrophysics \cite{hoffman91, vandeweygaert96}, where it has been used to help simulate the universe as we know it.

We unite these ideas with techniques from the growing literature on approximate GPs to obtain an easy-to-use and general-purpose approach for accurately sampling from GP posteriors in linear time.

\section{Review of Gaussian processes}
\label{sec:background}

\begin{figure*}[t]
\center
\includegraphics[width=\textwidth]{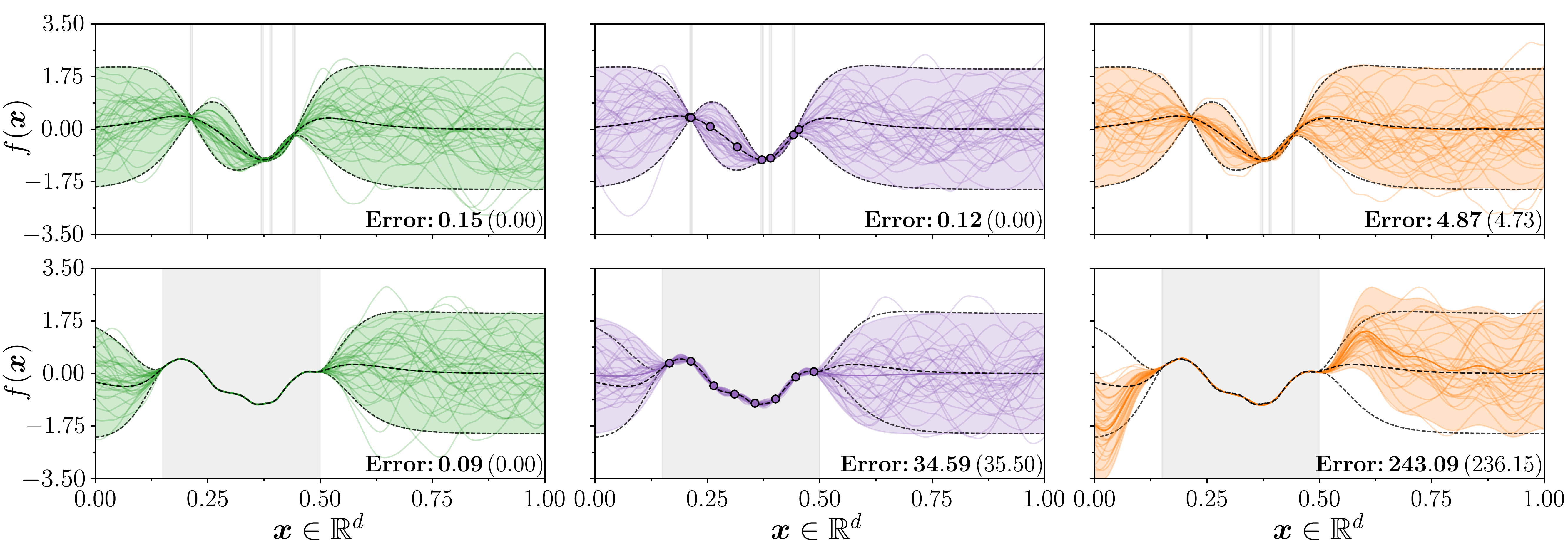}
\vspace{-18pt}
\caption{Comparison of GP posteriors and sample paths given $n=4$ (top) and $n=1000$ (bottom) observations at shaded locations. Error values shown in bottom right-hand corner of each figure denote 2-Wasserstein distances (see Section~\ref{sec:experiments}) between empirical (closed-form) posteriors and the true posterior (dashed black). \emph{Left:} Mean and two standard deviations of exact GP posterior (green) along with samples at $\test = 1024$ test locations. \emph{Middle:} Sparse GP with inducing variables $\v{u}$ at $m=8$ locations $\v{z} \in \c{X}$ denoted by `$\circ$'. \emph{Right:} Random Fourier feature-based GP with $\ell = 2000$ basis functions; for $n=1000$, variance starvation has started to set in and predictions away from the data show visible signs of deterioration.
}
\label{fig:comparison_of_posteriors}
\vspace{-6pt}
\end{figure*}

As notation, let $f : \c{X} \to \mathbb{R}$ be an unknown function with domain $\c{X} \subseteq \mathbb{R}^{d}$ whose behavior is indicated by a training set consisting of $n$ Gaussian observations $y_{i} = f(\v{x}_{i}) + \eps_i$ subject to measurement noise $\eps_i \sim \c{N}(0, \sigma^{2})$.

A Gaussian process is a random function $f : \c{X} \to \R$ such that, for any finite set of locations $\m{X}_{\test} \subseteq \c{X}$, the random vector $\v{f}_{\test} = f(\m{X}_{\test})$ follows a Gaussian distribution. In particular, if $f \sim \c{GP}(\mu, k)$, then $\v{f}_{\test} \sim \c{N}(\v{\mu}_{\test}, \m{K}_{\test,\test})$ is multivariate normal with covariance $\m{K}_{\test,\test} = k(\m{X}_{\test},\m{X}_{\test})$ specified by a kernel $k$. Henceforth, we assume a zero-mean prior $\mu(\cdot) = 0$ and continuous, stationary covariance function $k(\v{x}, \v{x}^\prime) = k(\v{x} - \v{x}^\prime)$.

Given $n$ observations $\v{y}$, the GP posterior at $\m{X}_{\test}$ is defined as $\v{f}_{\test} \given \v{y} \sim \c{N}(\v{m}_{\test \given n}, \m{K}_{\test,\test \given n})$, where we have defined
\[
\label{eqn:gp_posterior_moments}
\begin{aligned}
\v{m}_{\test\given n}
    &= \m{K}_{\test,n}(\m{K}_{n,n} + \sigma^{2}\m{I})^{-1}\v{y}
\\
\m{K}_{\test,\test\given n}
    &=\m{K}_{\test,\test} - \m{K}_{\test,n}(\m{K}_{n,n} + \sigma^{2}\m{I})^{-1}\m{K}_{n,\test}.
\end{aligned}
\]

When using \eqref{eqn:gp_posterior_moments} to help guide reinforcement learning agents \cite{kuss2004gaussian}, black-box optimizers \cite{snoek2012practical}, and other complex algorithms, we often rely on samples to estimate quantities of interest. The standard way of generating these samples is via a location-scale transform of Gaussian random variables $\v{\zeta} \sim \c{N}(\v{0}, \m{I})$, namely
\[
\label{eqn:sampler_nonparametric_affine}
\v{f}_{\test} \given \v{y}
    =
    \v{m}_{\test \given n}^{\vphantom{-1}} + \m{K}_{\test,\test \given n}^{1/2} \v{\zeta},
\]
where $(\cdot)^{1/2}$ denotes a matrix square root, such as a Cholesky factor.
Since this scheme is exact up to numerical error, we take it to be the gold standard against which the sample quality of alternatives will be judged. 
Unfortunately, this sampling strategy is also one of the least scalable, since the cost of computing $\m{K}_{\test,\test \given n}^{1/2}$ is already $\c{O}(*^{3})$.

The first column of Figure~\ref{fig:comparison_of_posteriors} visualizes sampling from a GP posterior given varying amounts of training data $n$. 
Since matrices on the right-hand-side of \eqref{eqn:gp_posterior_moments} grow as training sets increases in size, this method of sampling can be seen to accumulate little to no error as $n$ increases.
However, this growth requires us to invert increasingly large matrices both during training and at test time, causing standard GP inference and sampling methods to scale poorly in $n$.

\subsection{Function-space approximations to GPs}
\label{sec:function-space-approx}

The preceding interpretation of GPs, as distributions over functions with Gaussian marginals, is commonly known as the \emph{function-space} view \cite{rasmussen06}.
From this perspective, a natural way of approximating GPs is to represent $f$ in terms of its behavior $\v{u} = f(\m{Z})$ at a carefully chosen set of \emph{inducing locations} $\m{Z} = \{\v{z}_1,...,\v{z}_m\}$. 
In line with this function-space intuition of reasoning about $f$ via a small set of locations, this family of approximations is commonly referred to as \emph{sparse Gaussian processes}.

Rather than directly conditioning on observations $\v{y}$, sparse GPs begin by defining \emph{inducing distributions} $q(\v{u})$ that explain for the data.
Over the years, distinct iterations of sparse GPs have proposed different inducing paradigms \cite{snelson06, titsias09a, hensman17}.
In this work, we remain agnostic regarding the choice of $q(\v{u})$ and simply assume access to draws $\v{u} \sim q(\v{u})$.

Given $q(\v{u})$, we approximate posterior distributions as
\[
p(\v{f}_*\given\v{y}) \approx \int_{\R^m} p(\v{f}_*\given\v{u})q(\v{u}) \d\v{u}.
\]
If $\v{u} \sim \c{N}(\v\mu_{\v{u}}, \m\Sigma_{\v{u}})$, we compute this integral analytically to obtain a Gaussian distribution with mean and covariance
\[
\begin{aligned}
\v{m}_{\test \given m}
    &= \m{K}_{\test, m}^{\vphantom{-1}}\m{K}_{m,m}^{-1}\v{\mu}_{m}^{\vphantom{-1}}
    \\
\mathllap{\m{K}}_{\test, \test \given m}
    &= \m{K}_{\test, \test}^{\vphantom{-1}}
        \!\!+\! \m{K}_{\test, m}^{\vphantom{-1}}\m{K}_{m,m}^{-1}
        (\m{\Sigma}_{\v{u}}^{\vphantom{-1}} \!\!-\!\m{K}_{m,m}^{\vphantom{-1}})
        \m{K}_{m,m}^{-1}\mathrlap{\m{K}_{m,\test}^{\vphantom{-1}}.}
\quad
\end{aligned}
\label{eqn:svgp_posterior_moments}
\]
By virtue of explaining for $n$ observations using $m$ inducing variables, sparse GPs can be trained with $\c{O}(\tilde{n} m^{2})$ time complexity, where the choice of batch size $1 \le \tilde{n} \le n$ depends on the particular algorithm. Since high-quality approximations can be constructed using $m \ll n$ \cite{burt19}, sparse GPs drastically improve upon their exact counterparts' $\c{O}(n^{3})$ scaling.

While posterior moments \eqref{eqn:svgp_posterior_moments} may be computed at reduced cost, this benefit does not carry over when sampling. The standard procedure for sampling from sparse GPs is the same as in \eqref{eqn:sampler_nonparametric_affine} and incurs $\c{O}(*^{3})$ cost.\footnote{Select inducing point methods allow fast sampling from degenerate posteriors, see \textcite{quinonero05}.} When used to drive Monte-Carlo-based algorithms, sparse GPs can therefore be fast during training but slow during deployment. The middle column of Figure~\ref{fig:comparison_of_posteriors} depicts samples from a sparse GP posterior with $m=8$ inducing locations.

\subsection{Weight-space approximations to GPs}
\label{sec:weight-space-approx}

In the function-space view of GPs, we reason about $f$ in terms of the values it may assume at locations $\v{x} \in \c{X}$.
We now turn to the \emph{weight-space} view, where we will reason about $f$ as a weighted sum of basis functions.
Per the kernel trick \cite{scholkopf01}, $k$ can be viewed as the inner product in a reproducing kernel Hilbert space (RKHS) $\c{H}$ equipped with a feature map $\varphi: \c{X} \to \c{H}$.
If $\c{H}$ is separable, we may approximate this inner product as
\[
k(\v{x}, \v{x}^\prime) = \innerprod{\varphi(\v{x})}{\varphi(\v{x}^\prime)}_\c{H} \approx \v\phi(\v{x})^\top \v\phi(\v{x}),
\]
where $\v\phi: \c{X} \to \R^\ell$ is a finite-dimensional feature map \cite{rasmussen06}. For stationary covariance functions, Bochner's theorem implies that a suitable $\ell$-dimensional feature map can be constructed via a set of \emph{random Fourier features} (\rff) \cite{rahimi08}.

In this case, we have
$\phi_{i}(\v{x}) = \sqrt{\nicefrac{2}{\ell}}\,\cos(\v{\theta}_{i}^{\top}\v{x} + \tau_{i})$, where $\v{\theta}_{i}$ are sampled proportional to the kernel's spectral density and $\tau_{j} \sim U(0, 2\pi)$.
By defining the \emph{Bayesian linear model}
\<
\label{eqn:bayes_linear}
f(\cdot) &= \sum_{\smash{i=1}}^{\smash{\ell}} w_i \phi_i(\cdot)
&
w_i &\sim \c{N}(0, 1),
\>
we obtain an $\ell$-dimensional GP approximation. As in previous sections, $f$ is now a random function with Gaussian marginals. However, this stochasticity is now entirely controlled by the distribution of weights $\v{w}$.

For Gaussian likelihoods, the posterior weight distribution $\v{w} \given \v{y} \sim \c{N}(\v{\mu}_{\v{w} \given n}, \m{\Sigma}_{\v{w} \given n})$ is Gaussian with moments
\[
\begin{aligned}
\v\mu_{\v{w}\given n}
    &= (\m{\Phi}^{\top}\m{\Phi} + \sigma^{2}\m{I})^{-1}
    \m{\Phi}^{\top}\v{y}
    \\
\m{\Sigma}_{\v{w} \given n}
    &= (\m{\Phi}^{\top}\m{\Phi} + \sigma^{2}\m{I})^{-1} \sigma^{2},
\end{aligned}
\label{eqn:blm_posterior_moments}
\]
where $\m{\Phi} = \v{\phi}(\m{X})$ is an $n \times \ell$ feature matrix.
In both cases, we may solve for the right-hand side at $\c{O}(\min\{\ell, n\}^{3})$ cost by applying the Woodbury matrix identity.

Approximating the posterior $f \given \v{y}$ as weighted sums of basis functions in \eqref{eqn:bayes_linear} is particularly advantageous for purposes of sampling. 
As before, we may generate draws from \eqref{eqn:blm_posterior_moments} by first computing $\m{\Sigma}_{\v{w} \given n}^{1/2}$ at $\c{O}(\ell^{3})$ cost.\footnote{Alternatively, we may generate draws at $\c{O}(n^{3})$ cost by instead utilizing an eigen-decomposition \cite{seeger08}.}
Unlike before, we now sample weight vectors rather than function values and each draw now defines an actual \emph{function} evaluable at arbitrary locations $\v{x} \in \c{X}$. These methods have recently attracted attention in Bayesian optimization \cite{hernandez2014predictive, shahriari2015taking}, where the ability to fine-tune test locations $\m{X}_{*}$ by differentiating through samples is particularly valuable \cite{wilson2018maximizing}.

Unfortunately, these efficiency gains are counterbalanced by loss in expressivity. GP approximations equipped with covariance functions arising from finite-dimensional feature maps are well-known to exhibit undesirable pathologies at test time, see \textcite{rasmussen05}. 
In the case of Fourier-feature-based approximations, this tendency manifests as \emph{variance starvation}, whereby their extrapolatory predictions become increasingly ill-behaved as $n$ increases \cite{wang18,mutny18,calandriello19}. 
Intuitively, this occurs because the Fourier basis is only efficient at representing stationary GPs. The posterior, however, is generally nonstationary.
This tendency is evident in the right column of Figure~\ref{fig:comparison_of_posteriors}:  samples from the posterior clearly deteriorate in quality as we transition from low to high-data regimes.

\paragraph{Motivation.}
Prior to presenting our primary contributions, we briefly pause to restate key trends discussed above and shown in Figure~\ref{fig:comparison_of_posteriors}. 
Sampling from sparse GPs accommodates large amounts of training data $n = \vert\m{X}\vert$, but scales poorly with the number of test locations $\test = \vert\m{X}_{*}\vert$. 
Conversely, sampling from Fourier-feature-based weight-space approximations scales gracefully with $\test$, but results in high approximation error as $n$ increases. 
Function- and weight-space approaches to sampling from GP posteriors therefore exhibit opposing strengths and weaknesses.

Hence, the question: \emph{can we obtain the best of both worlds?}

\section{Sampling with Matheron's rule}
\label{sec:methods}

\begin{figure*}[t]
\center
\includegraphics[width=\textwidth]{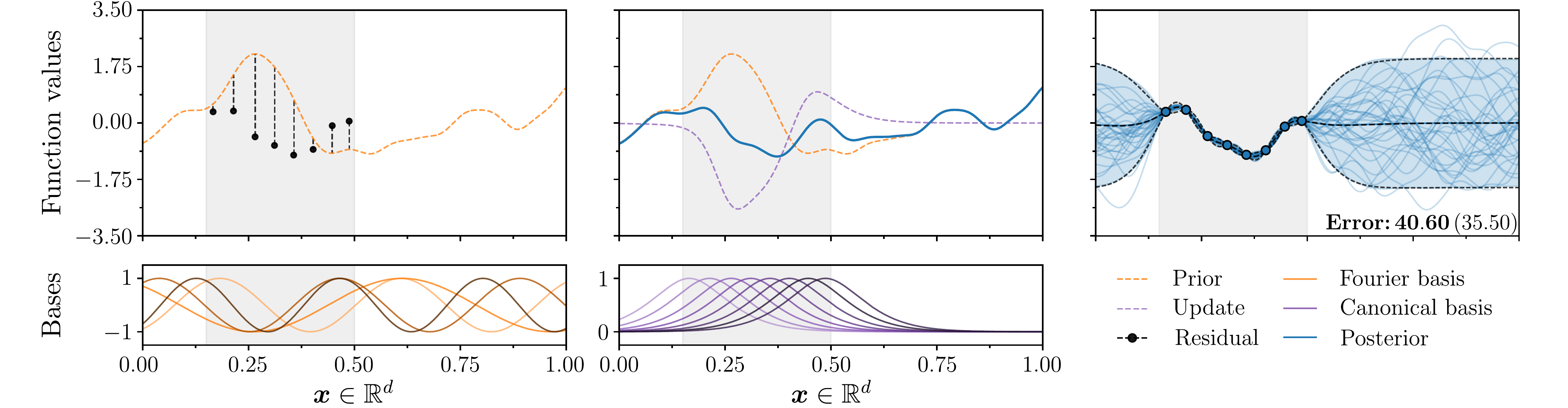}
\caption{Visual overview of \METHOD{} with a weight-space prior (orange) and a function-space update (purple); this example continues from Figure~\ref{fig:comparison_of_posteriors}. \emph{Left:} 1000 Fourier basis functions $\phi_{i}(\v{x}) = \cos(\v{\theta}_{i}^{\top}\v{x} + \tau_{i})$ are used to construct a function draw $f(\cdot) = \v{\phi}(\cdot)^{\top}\v{w}$ from an approximate prior (orange), resulting in residuals (dashed black) at each of $m = 8$ inducing locations $\v{z}_{j} \in \m{Z}$ (black circles). 
\emph{Middle:} a conditional sample path $f \given \v{u}$ (blue) is formed by adding an update (purple) consisting of canonical basis functions~$\psi_{j}(\cdot) = k(\cdot, \v{z}_{j})$ to $f$. \emph{Right:} the empirical distribution of sample paths $f \given \v{u}$ is compared with that of the sparse GP posterior~(dashed black). 2-Wasserstein errors of empirical (closed-form) posteriors were measured against the exact GP's moments.}
\label{fig:matheron_overview}
\end{figure*}

Our approach to designing an improved sampling scheme, which doubles as a rough outline for this section, is as follows: (i) analyze the shortcomings of existing methods, (ii) identify a decomposition of GPs that isolates these issues, (iii) represent the different terms using bases that address their corresponding issues. 
We begin by reviewing \emph{Matheron's rule} for Gaussian random variables \cite{journel1978mining, chiles09, doucet10}, which is central to our analysis.

\begin{theorem}[Matheron's Rule]
\label{thm:matheron_finitedim}
Let $\v{a}$ and $\v{b}$ be jointly Gaussian random variables.
Then the random variable $\v{a}$ conditional on $\v{b} = \v{\beta}$ is equal in distribution to
\[
(\v{a} \given \v{b} = \v\beta)
    \overset{\d}{=} 
    \v{a} + 
    \Cov(\v{a}, \v{b}) \Cov(\v{b}, \v{b})^{-1}(\v{\beta} - \v{b}).
    \label{eqn:matheron_finitedim}
\]
\end{theorem}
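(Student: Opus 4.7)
The plan is to verify the identity by showing that both sides of \eqref{eqn:matheron_finitedim} are Gaussian random variables with the same mean and covariance, since equality in distribution for jointly Gaussian vectors is determined by the first two moments. The left-hand side is Gaussian by the standard fact that conditionals of jointly Gaussian vectors on affine events are Gaussian, with mean $\v\mu_{\v{a}} + \Cov(\v{a},\v{b})\Cov(\v{b},\v{b})^{-1}(\v\beta - \v\mu_{\v{b}})$ and covariance given by the Schur complement $\Cov(\v{a},\v{a}) - \Cov(\v{a},\v{b})\Cov(\v{b},\v{b})^{-1}\Cov(\v{b},\v{a})$. The right-hand side is also Gaussian, being an affine transformation of the jointly Gaussian pair $(\v{a},\v{b})$ plus a deterministic shift.

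Next I would match moments. Writing $\m{C} = \Cov(\v{a},\v{b})\Cov(\v{b},\v{b})^{-1}$, the mean of the right-hand side is $\v\mu_{\v{a}} + \m{C}(\v\beta - \v\mu_{\v{b}})$, which agrees with the conditional mean above. For the covariance, only the random part $\v{a} - \m{C}\v{b}$ contributes, and
\[
\Cov(\v{a} - \m{C}\v{b})
 = \Cov(\v{a},\v{a}) - \m{C}\Cov(\v{b},\v{a}) - \Cov(\v{a},\v{b})\m{C}^{\top} + \m{C}\Cov(\v{b},\v{b})\m{C}^{\top}.
\]
Substituting the definition of $\m{C}$ and using symmetry of $\Cov(\v{b},\v{b})$, the last three terms collapse to a single $-\Cov(\v{a},\v{b})\Cov(\v{b},\v{b})^{-1}\Cov(\v{b},\v{a})$, reproducing the Schur complement. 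Hence the two distributions agree.

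A cleaner route, which I would probably use in the write-up, is to observe that $\v{a} - \m{C}\v{b}$ is jointly Gaussian with $\v{b}$ and has zero cross-covariance with $\v{b}$, namely $\Cov(\v{a} - \m{C}\v{b},\v{b}) = \Cov(\v{a},\v{b}) - \m{C}\Cov(\v{b},\v{b}) = 0$. For jointly Gaussian vectors, zero covariance implies independence, so $\v{a} - \m{C}\v{b}$ is independent of $\v{b}$. Conditioning on $\v{b} = \v\beta$ therefore leaves its distribution unchanged, and rewriting $\v{a} = (\v{a} - \m{C}\v{b}) + \m{C}\v{b}$ and replacing $\v{b}$ by $\v\beta$ on the second summand yields exactly \eqref{eqn:matheron_finitedim}.

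The proof is essentially algebraic; there is no genuine obstacle. The only place to be careful is the assumption that $\Cov(\v{b},\v{b})$ is invertible, which is implicit in the statement through the notation $\Cov(\v{b},\v{b})^{-1}$; in the degenerate case one would replace the inverse by a pseudoinverse and work on the appropriate subspace, but this is not needed here. Beyond that, it is just bookkeeping with transposes, and the independence-based argument sidesteps even that.
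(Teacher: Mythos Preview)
Your proposal is correct and follows exactly the approach the paper indicates: the paper's proof is the single line ``Follows immediately by computing the mean and covariance of both sides,'' which is precisely your moment-matching argument spelled out in detail. Your alternative independence-based route is also valid and is just a repackaging of the same computation.
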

\begin{proof}
Follows immediately by computing the mean and covariance of both sides.
\end{proof}

Intuitively, Matheron's rule tells us that conditional random variable $\v{a} \given \v{b}$ can be broken up into a term representing the prior $p(\v{a}, \v{b})$ and a term that communicates the error in the prior upon observing that $\v{b} = \v{\beta}$. Hence, we may sample $\v{a} \,\given\, \v{b}$ by drawing $\v{a}$ and  $\v{b}$ together from the prior and, subsequently, updating $\v{a}$ to account for residuals $\v{\beta} - \v{b}$ as in \eqref{eqn:matheron_finitedim}. 
The corresponding statement for GPs is as follows.

\begin{corollary}
\label{cor:matheron_gp}
For a Gaussian process $f \sim \c{GP}(0, k)$ with marginal $\v{f}_{m} = f(\m{Z})$, the process conditioned on $\v{f}_{m} = \v{u}$ admits, in distribution, the representation
\[
\label{eqn:matheron_gp}
\underbracket[0.5pt]{(f \given\v{u})(\cdot)\vphantom{\m{K}_{m,m}^{-1}}}_{\t{posterior}}
    \overset{\d}{=}
    \underbracket[0.5pt]{f(\cdot) \vphantom{\m{K}_{m,m}^{-1}}}_{\t{prior}} 
    + \underbracket[0.5pt]{k(\cdot, \m{Z})\m{K}_{m,m}^{-1}(\v{u} - \v{f}_{m})}_{\t{update}}.
\]
\end{corollary}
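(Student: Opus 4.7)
The plan is to reduce Corollary~\ref{cor:matheron_gp} to Theorem~\ref{thm:matheron_finitedim} by checking the identity on each finite collection of test points and then invoking the fact that a Gaussian process is characterized by its finite-dimensional distributions. Concretely, I would fix an arbitrary finite set $\m{X}_{\test} \subseteq \c{X}$ and set $\v{a} = f(\m{X}_{\test}) = \v{f}_{\test}$ and $\v{b} = f(\m{Z}) = \v{f}_{m}$. By the GP assumption, $(\v{a}, \v{b})$ is jointly Gaussian with $\Cov(\v{a}, \v{b}) = k(\m{X}_{\test}, \m{Z}) = \m{K}_{\test, m}$ and $\Cov(\v{b}, \v{b}) = \m{K}_{m, m}$, so Theorem~\ref{thm:matheron_finitedim} yields
\[
(\v{f}_{\test} \given \v{f}_{m} = \v{u}) \overset{\d}{=} \v{f}_{\test} + \m{K}_{\test, m}^{\vphantom{-1}} \m{K}_{m,m}^{-1}(\v{u} - \v{f}_{m}),
\]
which is precisely \eqref{eqn:matheron_gp} evaluated at $\m{X}_{\test}$.

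To upgrade this pointwise equality to the process-level statement, I would observe that the right-hand side of \eqref{eqn:matheron_gp} is itself a Gaussian process: the update $k(\cdot, \m{Z})\m{K}_{m,m}^{-1}(\v{u} - \v{f}_{m})$ is a deterministic linear combination of the fixed basis functions $k(\cdot, \v{z}_{j})$ with Gaussian coefficients that are jointly Gaussian with $f$. Hence both sides of \eqref{eqn:matheron_gp} are Gaussian processes whose finite-dimensional distributions agree at every choice of $\m{X}_{\test}$, and so they agree in distribution as processes.

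The only subtlety — really the one step beyond Theorem~\ref{thm:matheron_finitedim} — is this lift from finite-dimensional marginals to the process law. This is standard and follows from the defining property of a Gaussian process, so the main obstacle is purely bookkeeping rather than anything analytical; the algebraic content is entirely contained in the finite-dimensional Matheron identity.
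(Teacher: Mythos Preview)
Your proposal is correct and matches the paper's own proof essentially verbatim: the paper simply notes that by Theorem~\ref{thm:matheron_finitedim} the identity holds for all finite-dimensional marginals, and hence for the process. Your additional remark that the right-hand side is itself a Gaussian process is a helpful clarification, but the argument is the same.
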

\begin{proof}
By Theorem \ref{thm:matheron_finitedim}, the corollary holds for arbitrary finite-dimensional marginals, so the claim follows.
\end{proof}

Unlike \eqref{eqn:gp_posterior_moments} and \eqref{eqn:svgp_posterior_moments}, Corollary~\ref{cor:matheron_gp} defines a \emph{pathwise update}: rather than conditioning the prior as a distribution, we update the prior as realized in terms of sample paths. As we will soon see, this ability to go from prior to posterior (function) draws without needing to compute posterior covariance matrices (and their square-roots) will be the key to unlocking fast and accurate sampling from GP posteriors.

We are not the first to have realized this fact. This approach to simulating Gaussian conditionals is implicit in Matheron's pioneering work in the field of geostatistics, where it was subsequently popularized by \textcite{journel1978mining}. Decades later, \eqref{eqn:matheron_gp} was rediscovered in the context of $N$-body simulations by \textcite{hoffman91}. We combine these ideas with modern machine learning methods (such as sparse GPs and random Fourier features) to create a more efficient approach to sampling.

\subsection{Pathwise updates in weight- and function-spaces}
Rewriting the standard formulae for sparse and exact GP posteriors, respectively, as pathwise updates in accordance with Theorem~\ref{thm:matheron_finitedim}, we obtain
\begin{align}
\v{f}_{\test} \given\, \v{u}
    &\overset{\d}{=} 
        \v{f}_{\test} + \m{K}_{*,m}\m{K}_{m,m}^{-1}(\v{u} - \v{f}_{m})
    \label{eqn:matheron_gp_sparse}
\\
\v{f}_{\test} \given\, \v{y}
    &\overset{\d}{=} 
        \v{f}_{\test} + \m{K}_{*,n}(\m{K}_{n,n} + \sigma^{2}\m{I})^{-1}(\v{y}
        - \v{f} - \v{\varepsilon}).
    \label{eqn:matheron_gp_exact}  
\end{align}
When sampling from sparse GPs in \eqref{eqn:matheron_gp_sparse}, we draw $\v{f}_{*}$ and $\v{f}_{m}$ together from the prior, and independently generate target values $\v{u} \sim q(\v{u})$. 
When sampling from exact GPs in \eqref{eqn:matheron_gp_exact}, we again begin by jointly drawing $\v{f}_{*}$ and $\v{f}$ from the prior. 
Here however, we no longer need to generate targets $\v{u} = \v{y}$. Instead, we combine $\v{f}$ with noise variates $\v{\varepsilon} \sim \c{N}(\v{0}, \sigma^2 \m{I})$ such that $\v{f} + \v{\varepsilon}$ constitutes a draw from the prior distribution of $\v{y}$.

Turning to the weight-space setting, the analogous pathwise update given an initial weight vector $\v{w} \sim \c{N}(\v{0}, \m{I})$ is then
\[
\v{w} \given \v{y} 
    \overset{\d}{=} 
        \v{w} + \m{\Phi}^{\top}
        (\m{\Phi}\m{\Phi}^{\top} + \sigma^2\m{I})^{-1}
        (\v{y} - \m{\Phi}\v{w} - \v{\varepsilon}).
    \label{eqn:matheron_gp_rff}
\]
At first glance, it appears that sampling via Theorem~\ref{thm:matheron_finitedim} does not improve over standard methods.
Whereas \eqref{eqn:matheron_gp_rff} is of modest practical interest (it allows us to sample at $\c{O}(\min\{\ell, n\}^{3})$ cost without resorting to an eigen-decomposition), \eqref{eqn:matheron_gp_sparse} and \eqref{eqn:matheron_gp_exact} are actually more expensive than their standard counterparts.

At the same time, however, Theorem~\ref{thm:matheron_finitedim} allows us to view GP posteriors from a different perspective.
In particular, separating the effect of the prior from that of the data allows us to better diagnose the different sampling scheme's shortcomings.
For function-space approaches, we see that $\c{O}(\test^{3})$ time complexity is specific to the prior, since the update is linear in $\test$. 
For weight-space methods, we see that erratic extrapolations stem from difficulty representing the data (i.e., the update), since stationary priors are well-behaved under the Fourier basis.
Equipped with a better understanding of \emph{why} these methods fail, we now demonstrate \emph{how} to address these issues.

\subsection{Pathwise updates with decoupled bases}
\label{sec:decoupled_sampling}
So far, we have implicitly assumed a unified view of GP posteriors: when sampling in weight-space and in function-space, we sought to generate draws from conditional distributions over weight vectors and function values, respectively.
Several recent works \cite{cheng17, salimbeni18, shi19} have introduced decompositions that separately represent different aspects of GPs via different bases, such as RKHS subspaces and their orthogonal complements.  
There, the authors exploit the different bases' properties to better approximate the overarching process. 
We will do the same, but our goal will be to efficiently sample from the accompanying posteriors.

Corollary~\ref{cor:matheron_gp} is a pathwise update for Gaussian random variables that doubles as a decomposition of the posterior.
To further build on this distinction, we restate this using a weight-space approximation to the prior
\<
\label{eqn:decoupled_sample_path}
\underbracket[0.5pt]{
(f \given \v{u})(\cdot)\vphantom{\sum_{j=1}^{\smash{m}} v_{j} k(\cdot, \v{z}_{j})}
}_{\t{sparse posterior}}
&\overset{\d}{\approx}
    \underbracket[0.5pt]{\sum_{i=1}^{\smash{\ell}} w_{i} \phi_{i}(\cdot)
    \vphantom{\sum_{j=1}^{\smash{m}} v_{j} k(\cdot, \v{z}_{j})}}_{\t{weight-space prior}}
    + 
    \underbracket[0.5pt]{\sum_{j=1}^{\smash{m}} v_{j} k(\cdot, \v{z}_{j}),}_{\t{function-space update}}
\>
where we have defined $\v{v} = \m{K}_{m,m}^{-1}(\v{u} - \m{\Phi}\v{w})$. The equivalent expression for exact GPs with Gaussian observations is obtained by adding noise $\v{\varepsilon} \sim \c{N}(\v{0}, \sigma^2 \m{I})$ to $\m{\Phi}\v{w}$ and replacing $\m{Z}$, $\v{u}$, and $\m{K}_{m,m}^{-1}$ with $\m{X}$, $\v{y}$, and $(\m{K}_{n,n} + \sigma^{2}\m{I})^{-1}$.

Figure~\ref{fig:matheron_overview} acts as a visual guide for \METHOD{}, showing the progression from prior \eqref{eqn:bayes_linear} to posterior \eqref{eqn:decoupled_sample_path}. Stepping through this example: (i) we draw a function $f$ from an approximate prior, (ii) we construct an update function to account for the residuals $\v{u} - f(\m{Z})$ produced by an independent sample $\v{u} \sim q(\v{u})$, (iii) we add these functions together to obtain a function drawn from an approximate posterior \eqref{eqn:decoupled_sample_path} that we may freely evaluate anywhere in $\c{X}$.

In \eqref{eqn:decoupled_sample_path}, we obtain an efficient approximator by separately discretizing the prior using Fourier basis functions $\phi_i(\cdot)$ and the update using canonical basis functions $k(\cdot, \m{z}_j)$. While other decompositions exist (see Appendix~\ref{apdx:discussion}), this particular decoupling directly capitalizes upon each basis' strengths: the Fourier basis is well-suited for representing the prior \cite{rahimi08} and the canonical basis is well-suited for representing the data \cite{burt19}.

By combining these bases as in \eqref{eqn:decoupled_sample_path}, we therefore inherit the best of both worlds. As in weight-space methods, we may efficiently approximate draws from the prior using an $\ell$-dimensional Bayesian linear model $f(\cdot) = \v{\phi}(\cdot)^{\top}\v{w}$, where weights $\v{w}$ are standard normal (owing to the assumed stationarity of kernel $k$).\footnote{This point was not lost on \textcite{hoffman91}, who similarly approximated stationary priors using spectral methods.} As in function-space methods, we may faithfully represent the data since basis functions $k(\cdot, \v{z}_{j})$ are in one-to-one correspondence with inducing locations $\v{z}_{j} \in \m{Z}$. This retention of statistical propriety is evident on the right-hand side of Figure~\ref{fig:matheron_overview}: despite using half as many basis functions as the weight-space method (see Figure~\ref{fig:comparison_of_posteriors}), \METHOD{}'s statistical properties mirror those of the gold standard.

\begin{figure*}[t]
\center
\includegraphics[width=\textwidth]{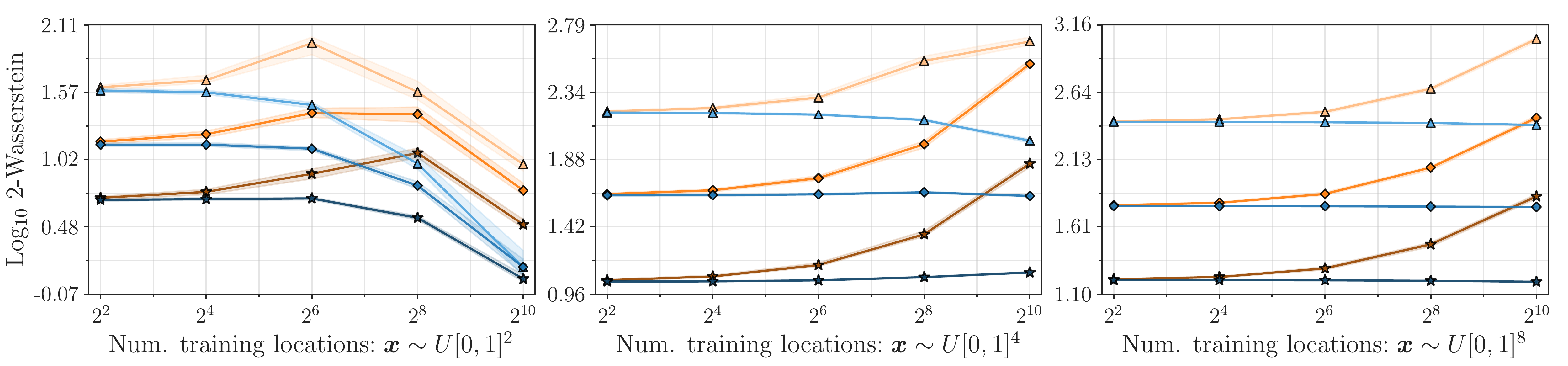}
\caption{Empirical estimates of 2-Wasserstein distances between true posteriors and empirical distributions of $100,000$ samples at 1024 test locations $\m{X}_{*}$ given varying amounts of training data, shown as medians and interquartile ranges~(shaded regions) measured over 64 independent trials.
Weight-space (orange) and decoupled (blue) sampling utilized a total of $\numBasisTotal = m + \ell$~basis functions.
Results using~$\ell \in \{1024, 4096, 16384\}$ initial bases correspond with $\{\text{light}, \text{medium}, \text{dark}\}$ tones and
$\{
    \bigtriangleup, 
    \hbox{\scalebox{1.5}{$\diamond$}},
    \hbox{{\faStarO}}
\}$ 
markers. See Appendix~\ref{apdx:additional_experiments_2wass} for extended results and comparison with LOVE \cite{pleiss2018constant}.
}
\label{fig:wasserstein2_test}
\end{figure*}

Expanding upon these properties, we note the following intuitive behaviors. The update function's role of "correcting" for residuals $\v{u} - f(\m{Z})$ subsumes that of representing the posterior mean: 
replacing the prior draw $f$ with the prior mean $\E[f]$ reduces \eqref{eqn:decoupled_sample_path} to the standard expression for the conditional expectation $\E[f \given \v{u}]$.
Since this task is performed in the canonical basis, the expected value of decoupled sample paths is guaranteed to coincide with that of (sparse) GP's posterior. As a result, \METHOD{} becomes increasingly well-behaved as the number of training (inducing) locations grows and uncertainty decreases. Conversely, we are guaranteed to revert to the prior as we move away from the data, assuming local basis functions $k(\cdot, \v{z})$ (see the center column of Figure~\ref{fig:matheron_overview}).

Decoupled sampling complements these desiderata with function draws' inherent strengths. The immediate implication here is that decoupled sampling scales linearly with respect to the number of test locations $\m{X}_{*}$. A more subtle point is that these functions are \emph{pathwise differentiable} with respect to $\v{x}$---an affordance with significant consequences when seeking to understand Gaussian processes' extrema.

While these insights tell us about \METHOD's qualitative behavior, they do not allow us to make quantitative statements about its purported benefits. To this end, the following section provides a means of objectively comparing different sampling schemes' statistical properties.

\subsection{Error bounds}
\label{sec:error_bound}

Due to its use of an approximate prior, \METHOD introduces an additional source of error at test time.  Anecdotal evidence (see Figure~\ref{fig:matheron_overview}) suggests that this sampling error is often small in comparison to the error introduced by inducing point approximations. Here, we study \METHOD's analytic properties to clarify how quality of the approximate prior impacts that of decoupled function draws. We present the results of this analysis below, and reserve proofs and derivations of associated constants for Appendix~\ref{apdx:error-analysis}. As a convenient shorthand, we refer to the particular decoupled sparse GP approximation introduced in \eqref{eqn:decoupled_sample_path} as \dsgp.

Gaussian processes are often compared via a suitable notion of similarity on the space of probability distributions \cite{gibbs02}.
We focus on the 2-Wasserstein distance between GPs \cite{mallasto2017learning}. By their Kantorovich dual formulations \cite{peyre2019computational}, Wasserstein distances upper bound the error for integrating (Lipschitz) continuous functions with respect to approximating distributions, making them a natural performance metric in Monte Carlo settings.
Moreover, 2-Wasserstein distances between exact GPs and finite-dimensional approximations thereof are finite and thus facilitate meaningful performance comparisons.
For \dsgp{}, we may bound~these~as~follows.

\begin{proposition}
\label{prop:wasserstein_bound}
Assume that $\c{X} \subseteq \mathbb{R}^d$ is compact and that stationary kernel $k$ is sufficiently regular for $f \sim \c{GP}(0, k)$ to be almost surely continuous. Let $f \given \v{y}$ be the posterior of $f$, $f^{(s)}$ that of a sparse GP, and $f^{(d)}$ that of a corresponding \dsgp{} defined via an approximate prior $f^{(w)}$. Then we have 
\vspace*{-4ex}
\[
\begin{aligned}
\label{eqn:wasserstein_bound}
&W_{2,L^2(\c{X})} \del[1]{f^{(d)}, f \given \v{y}}
\\
 &
    \leq 
    \smash{
    \underbracket[0.5pt]{W_{2,L^2(\c{X})}\del[1]{f^{(s)}, f \given \v{y}}}_{\t{error in the (sparse) posterior}}
    + 
    \underbracket[0.5pt]{C_1 W_{2, C(\c{X})}\del[1]{f^{(w)}, f}}_{\t{error in the prior}}
    ,
    }
\end{aligned}
\]
\strut
\\[-0.5ex]
where $W_{2,L^2(\c{X})}$ and $W_{2,C(\c{X})}$ are the 2-Wasserstein distances over $L^2(\c{X})$ and the space of continuous functions $C(\c{X})$ equipped with the supremum norm, respectively.
\end{proposition}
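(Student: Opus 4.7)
The plan is to apply the triangle inequality for $W_{2,L^2(\c{X})}$,
\begin{equation*}
W_{2,L^2(\c{X})}\bigl(f^{(d)}, f \given \v{y}\bigr) \le W_{2,L^2(\c{X})}\bigl(f^{(d)}, f^{(s)}\bigr) + W_{2,L^2(\c{X})}\bigl(f^{(s)}, f \given \v{y}\bigr),
\end{equation*}
so that the second summand is already the sparse-posterior error appearing in the stated bound. The whole task then reduces to controlling $W_{2,L^2(\c{X})}\bigl(f^{(d)}, f^{(s)}\bigr)$ by $C_1\cdot W_{2, C(\c{X})}(f^{(w)}, f)$.

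The key observation that makes this possible is that, by Corollary~\ref{cor:matheron_gp}, both the sparse posterior and the \dsgp{} posterior admit pathwise representations differing only in which prior is substituted into the \emph{same} affine update map. Concretely, defining the random operator
\begin{equation*}
T_{\v u}(g)(\cdot) = g(\cdot) + k(\cdot,\m{Z})\m{K}_{m,m}^{-1}\bigl(\v{u}-g(\m{Z})\bigr),\qquad \v{u}\sim q(\v u),
\end{equation*}
one has $f^{(s)} \overset{\d}{=} T_{\v u}(f)$ and $f^{(d)} \overset{\d}{=} T_{\v u}(f^{(w)})$, with the same $\v u$ coupled across both sides. If $(\tilde f^{(w)}, \tilde f)$ realises (up to $\eps$) the optimal coupling for $W_{2, C(\c{X})}(f^{(w)}, f)$, then sampling $\v u$ independently makes $\bigl(T_{\v u}(\tilde f^{(w)}),\, T_{\v u}(\tilde f)\bigr)$ a valid coupling of $f^{(d)}$ and $f^{(s)}$, whose expected squared $L^2$ distance upper-bounds $W_{2,L^2(\c{X})}(f^{(d)}, f^{(s)})^2$.

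The remaining technical step is to show that $T_{\v u}$ is deterministically Lipschitz as a map $(C(\c{X}), \|\cdot\|_\infty) \to (L^2(\c{X}), \|\cdot\|_{L^2})$ with a constant $C_1$ independent of $\v u$. This follows from the identity
\begin{equation*}
T_{\v u}(g_1) - T_{\v u}(g_2) = (g_1 - g_2) - k(\cdot, \m{Z})\m{K}_{m,m}^{-1}\bigl(g_1(\m{Z}) - g_2(\m{Z})\bigr),
\end{equation*}
where the first summand has $L^2$-norm at most $\sqrt{\mathrm{vol}(\c{X})}\,\|g_1-g_2\|_\infty$ by compactness of $\c{X}$, and the second is bounded by $\|g_1 - g_2\|_\infty$ times a finite constant assembled from $\m{K}_{m,m}^{-1}$ and the Gram-type matrix $\int_\c{X} k(\v{x},\m{Z})^\top k(\v{x},\m{Z})\d\v{x}$. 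Passing to squared expectations under the chosen coupling and infimising then yields $W_{2,L^2(\c{X})}(f^{(d)}, f^{(s)}) \le C_1\, W_{2, C(\c{X})}(f^{(w)}, f)$, which combined with the triangle inequality proves the proposition.

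The main obstacle is extracting a clean, $\v u$-independent constant $C_1$ from this sup-norm-to-$L^2$ Lipschitz estimate, together with the measurability prerequisite that $f$ and $f^{(w)}$ be genuine $C(\c{X})$-valued random elements so that $W_{2, C(\c{X})}(f^{(w)}, f)$ is well-defined; both issues are handled by the compactness of $\c{X}$ and the almost-sure continuity hypothesis on $f\sim \c{GP}(0,k)$ and on the finite-dimensional approximation $f^{(w)}$. Everything else reduces to routine applications of the triangle inequality and the coupling definition of the Wasserstein distance.
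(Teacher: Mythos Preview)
Your proposal is correct and follows essentially the same route as the paper: triangle inequality to isolate $W_{2,L^2(\c{X})}(f^{(d)},f^{(s)})$, Matheron's pathwise representation to write both as the same affine map applied to $f$ versus $f^{(w)}$, a $\v u$-independent Lipschitz bound from $C(\c{X})$ to $L^2(\c{X})$, and a coupling argument to pass to Wasserstein distances. The only cosmetic difference is in how the constant is assembled---the paper uses a pointwise H\"older estimate with the $L(\ell^\infty;\ell^1)$ operator norm of $\m{K}_{m,m}^{-1}$ rather than your Gram-integral formulation---but the structure of the argument is the same.
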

\vspace*{-2.5ex}
\begin{proof}
Appendix \ref{apdx:error-analysis}.
\end{proof}
\vspace*{-1.5ex}

This bound tells us that the error exhibited by \dsgp{} sample paths cleanly separates into independent terms associated with the sparse GP and the approximate prior. In particular, the way in which error in the prior carries over to the posterior is controlled by the inducing locations $\m{Z}$, which $C_{1}$ depends on, but not by the inducing distribution $q(\v{u})$.

\begin{figure*}
\center
\includegraphics[width=\textwidth]{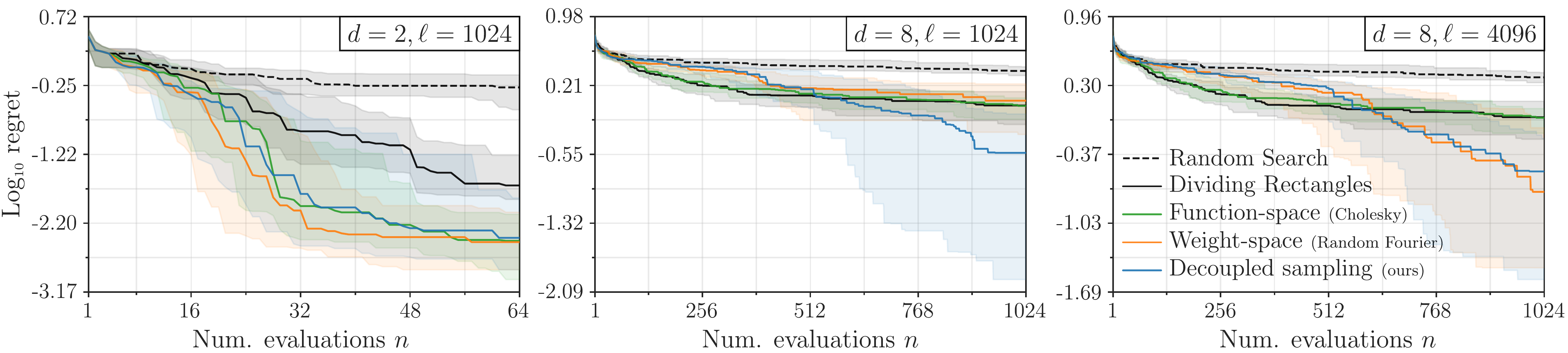}
\caption{Median performances and interquartile ranges of parallel Thompson Sampling (TS) and popular baselines when optimizing $d$-dimensional functions drawn from GP priors. 
Function-space TS delivers competitive performance for $d=2$, but is held back by its inability to efficiently utilize gradient information to combat the curse of dimensionality. \rff-based TS avoids this issue but requires $b \gg m$ basis functions to perform well. TS with decoupling sampling matches or outperforms competing approaches in all observed cases. See Appendix~\ref{apdx:additional_experiments_ts} for additional results and runtime distributions.}
\label{fig:thompson_sampling_results}
\end{figure*}

We continue this analysis by studying \dsgp{}'s moments. Since a \dsgp{}'s mean is guaranteed to coincide with that of a sparse GP, we focus on the error it introduces into the posterior covariance. When using \rff{} to approximate the prior, this error will depend on the $\ell$-dimensional basis $\v{\phi}$ given by parameters $\tau \sim U(0, 2\pi)$ and $\v{\theta} \sim s(\v{\theta})$, where $s(\cdot)$ denotes the (normalized) spectral density of $k$. We therefore bound the expectation of this error.

\vspace*{0.25ex}

\begin{proposition}
\label{prop:covariance_error}
Continuing from Proposition~\ref{prop:wasserstein_bound}, let
$k^{(f \given \v{y})}$, $k^{(w)}$, $k^{(s)}$, $k^{(d)}$ respectively denote the covariance functions of processes $f \given \v{y}$, $f^{(w)}$, $f^{(s)}$, $f^{(d)}$. Denoting the supremum norm over continuous functions by $\norm{\cdot}_{C(\c{X}^2)}$, it follows that
\[
\begin{aligned}
&
    \mathbb{E}_{\v{\phi}}
    \norm[1]{k^{(d)} - k^{(f \given \v{y})}}_{C(\c{X}^2)}
\\
& \quad\quad\leq 
    \norm[1]{k^{(s)} - k^{(f \given \v{y})}}_{C(\c{X}^2)} + \frac{C_2 C_3}{\sqrt{\ell}},
\end{aligned}
\]
where the constants $C_2$ and $C_{3}$ are given by \textcite{sutherland15} and in Appendix~\ref{apdx:error-analysis}, respectively.
\end{proposition}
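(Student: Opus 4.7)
The plan is to pass the expectation through a triangle inequality in $C(\c{X}^2)$:
\[
\mathbb{E}_{\v{\phi}}\norm{k^{(d)} - k^{(f\given\v{y})}}_{C(\c{X}^2)} \leq \mathbb{E}_{\v{\phi}}\norm{k^{(d)} - k^{(s)}}_{C(\c{X}^2)} + \norm{k^{(s)} - k^{(f\given\v{y})}}_{C(\c{X}^2)},
\]
noting that the second term does not depend on $\v{\phi}$. It then suffices to show $\mathbb{E}_{\v{\phi}}\norm{k^{(d)} - k^{(s)}}_{C(\c{X}^2)} \leq C_2 C_3/\sqrt{\ell}$.

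To do so, I would write the decoupled sample path from \eqref{eqn:decoupled_sample_path} as $f^{(d)}(\v{x}) = \v{a}(\v{x})^\top\v{w} + \v{b}(\v{x})^\top\v{u}$ with $\v{a}(\v{x}) = \v{\phi}(\v{x}) - \m{\Phi}^\top\m{K}_{m,m}^{-1}k(\m{Z},\v{x})$ and $\v{b}(\v{x}) = \m{K}_{m,m}^{-1}k(\m{Z},\v{x})$. Since $\v{w}\sim\c{N}(\v{0},\m{I})$ and $\v{u}$ are independent, $k^{(d)}(\v{x},\v{x}') = \v{a}(\v{x})^\top\v{a}(\v{x}') + \v{b}(\v{x})^\top\m{\Sigma}_{\v{u}}\v{b}(\v{x}')$, while \eqref{eqn:svgp_posterior_moments} rearranges to $k^{(s)}(\v{x},\v{x}') = k(\v{x},\v{x}') - k(\v{x},\m{Z})\m{K}_{m,m}^{-1}k(\m{Z},\v{x}') + \v{b}(\v{x})^\top\m{\Sigma}_{\v{u}}\v{b}(\v{x}')$, so the $\v{b}^\top\m{\Sigma}_{\v{u}}\v{b}$ contributions cancel. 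Setting $\Delta := \tilde k - k$ with $\tilde k(\v{x},\v{x}') := \v{\phi}(\v{x})^\top\v{\phi}(\v{x}')$ and expanding $\v{a}^\top\v{a}$, the pure-$k$ contributions reproduce exactly $k(\v{x},\v{x}') - k(\v{x},\m{Z})\m{K}_{m,m}^{-1}k(\m{Z},\v{x}')$ and cancel as well, leaving
\[
\begin{aligned}
k^{(d)}(\v{x},\v{x}') - k^{(s)}(\v{x},\v{x}')
    &= \Delta(\v{x},\v{x}') - \Delta(\v{x},\m{Z})\v{b}(\v{x}') \\
    &\quad - \v{b}(\v{x})^\top\Delta(\m{Z},\v{x}') + \v{b}(\v{x})^\top\Delta(\m{Z},\m{Z})\v{b}(\v{x}'),
\end{aligned}
\]
which is linear in the RFF approximation error $\Delta$.

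Bounding this bilinear form by the maximum entry of the $\Delta$ block times the $1$-norms of the outer vectors yields $|k^{(d)}(\v{x},\v{x}') - k^{(s)}(\v{x},\v{x}')| \leq (1+\norm{\v{b}(\v{x})}_1)(1+\norm{\v{b}(\v{x}')}_1)\norm{\Delta}_{C(\c{X}^2)}$; taking suprema gives $\norm{k^{(d)} - k^{(s)}}_{C(\c{X}^2)} \leq C_3\norm{\Delta}_{C(\c{X}^2)}$ with $C_3 := \sup_{\v{x}\in\c{X}}(1+\norm{\m{K}_{m,m}^{-1}k(\m{Z},\v{x})}_1)^2$, finite by compactness of $\c{X}$ and continuity of $k$. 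Invoking the uniform RFF bound of \textcite{sutherland15} to get $\mathbb{E}_{\v{\phi}}\norm{\Delta}_{C(\c{X}^2)} \leq C_2/\sqrt{\ell}$ then closes the argument. The main obstacle is the algebraic bookkeeping that forces the decisive cancellation of all pure-$k$ contributions in $k^{(d)} - k^{(s)}$: without it the residual would not be linear in $\Delta$, and the expected covariance error would not vanish as $\ell\to\infty$.
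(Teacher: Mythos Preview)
Your argument is correct and follows essentially the same route as the paper: both apply the triangle inequality, show that $k^{(d)}-k^{(s)}$ is a linear functional of the RFF kernel error $\Delta=k^{(w)}-k$ (the paper packages your bilinear expression as a bounded operator $M_k$ on $C(\c{X}^2)$ and writes $k^{(d)}-k^{(s)}=M_k\Delta$), bound the resulting constant, and invoke \textcite{sutherland15}. Your $C_3=\sup_{\v{x}}(1+\norm{\m{K}_{m,m}^{-1}k(\m{Z},\v{x})}_1)^2$ is in fact a slightly sharper version of the paper's operator-norm bound $C_3=m\bigl[1+\norm{\m{K}_{m,m}^{-1}}_{L(\ell^\infty;\ell^1)}\norm{k}_{C(\c{X}^2)}\bigr]^2$.
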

\vspace*{-2.5ex}
\begin{proof}
Appendix \ref{apdx:error-analysis}.
\end{proof}

Much like \dsgp{}s themselves, the error in the posterior covariance separates into terms associated with the covariance of the sparse GP $k^{(s)}$ and approximate prior $k^{(w)}$. This latter source of error represents discrepancies introduced by using \rff{} to approximate the prior and decays at a \emph{dimension-free rate} as the number of basis functions $\ell$ increases. Intuitively, this behavior reflects \rff{}'s nature as a Monte Carlo estimator of the true covariance. In practice, the number of training points $n$ typically grows faster than the dimensionality $d$. Hence, purely \rff-based GP posteriors struggle to capitalize upon this property due to variance starvation. Since \dsgp{} does not exhibit this pathology, it fully benefits from this dimension-free rate of convergence.

\section{Experiments}
\label{sec:experiments}

We investigate  \METHOD's behavior in a series of sample tests accompanied by two practical applications, Thompson sampling and dynamical system simulation. Each of these experiments highlights different properties of decoupled sample paths: uncertainty calibration, reliability and differentiability, and computational savings.\footnote{Code: \url{https://github.com/j-wilson/GPflowSampling}
}

\paragraph{Uncertainty calibration with the 2-Wasserstein distance.}
To better understand how the bounds presented in Section~\ref{sec:error_bound} manifest in the real world, we put the various sampling schemes through numerical experiments that empirically estimated the 2-Wasserstein distance bounded by \eqref{eqn:wasserstein_bound}. 
These tests allow us to see how this distance is affected by factors, such as the number of training points, whose effects are difficult to directly analyze.
In each trial, we measured the distance between the true posterior and empirical distributions of samples generated using the various strategies introduced in the paper. To eliminate confounding variables, experiments were run using exact GPs with known hyperparameters (see Appendix~\ref{apdx:additional_experiments} for details).

Our investigation focuses on each method's behavior as the number of inducing locations $m$ (equivalently, the number of training points $n$) increases relative to the number of basis functions employed. For fair comparison, the total number of basis functions $\numBasisTotal = m + \ell$ utilized by weight-space and decoupled samplers was held equal, where $\ell$ denotes an initial allocation. For decoupled sampling, $\ell$ specifies the number of Fourier features used to approximate the prior.

Figure \ref{fig:wasserstein2_test} shows that weight-space sampling tends to deteriorate as $m$ increases relative to $b$. Variance starvation causes sample paths' extrapolatory behavior to increasingly misrepresent the posterior. This issue is exacerbated as dimensionality $d$ rises, since we can expect the (randomly chosen) test locations $\m{X}_{*}$ to lie further and further away from the data.

In contrast, \METHOD{} retains its performance, and may even improve. This reflects the fact that the data is represented in an efficient basis that grows alongside it. For sparse GPs with $m \ge n$ (which includes exact GPs as a special case), we may always represent the data exactly: usually, however, $m \ll n$ inducing locations (i.e., kernel basis functions) suffice \cite{burt19}. Since we expect posteriors to contract as training sets expand, the functions drawn from these posteriors behave increasingly similar to their means. Since decoupled sample paths are guaranteed to exhibit the correct means, their statistical properties may improve. This process occurs more slowly in higher-dimensional cases. However, since away from data these function draws revert to the approximate prior, they exhibit constant error when extrapolating---the approximation error of said prior.

\begin{figure*}[t]
\center
\includegraphics[width=\textwidth]{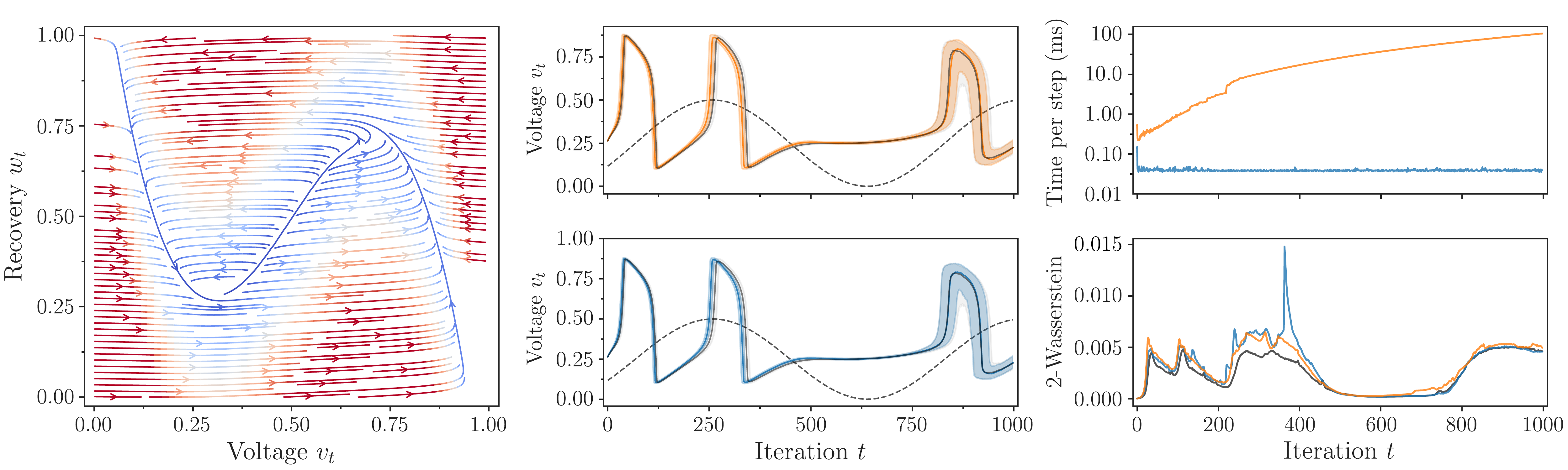}
\caption{
Sparse GP-based simulation of a FitzHugh-Nagumo model neuron subject to evolution noise $\v{\varepsilon}_{t} \sim \c{N}(\v{0}, 10^{-2}\m{I})$ and current injection $I(t) \in \mathbb{R}$. \emph{Left:} True drift function $f$ given a fixed current $I(t) = 0.5$. \emph{Middle:} Medians and interquartile ranges of 1000 voltage traces generated in response to a sinusoidal control signal (dashed black)
using iterative (orange) and decoupled (blue) sampling are compared with those of ground truth simulations (gray). \emph{Upper right:} Runtime comparison of iterative and decoupled sampling: the former scales cubically, while the latter runs in linear time. \emph{Lower right:} 
2-Wasserstein distances between state distributions at times $t$ are approximated using the Sinkhorn algorithm \cite{cuturi2013sinkhorn}. The noise-floor (gray) was established using additional ground truth simulations.
}
\label{fig:fitzhugh_nagumo}
\end{figure*}

\paragraph{Thompson Sampling with reliable, differentiable draws.} Thompson Sampling (TS)
is a classic strategy for decision-making in the face of uncertainty, whereby a choice $\v{x} \in \c{X}$ is selected according to its estimated probability of being optimal \cite{thompson33}.
When used as a vehicle for GP-based optimization, TS evaluates a pathwise minimizer
\[
    \v{x}_{n+1} \in \argmin_{\v{x} \in \c{X}} (f \given \v{y})(\v{x})
\]
of a function drawn $f \given \v{y}$ from the posterior. Upon finding this minimizer, $\v{x}_{n+1}$ is evaluated to obtain $y_{n+1}$, the pair $(\v{x}_{n+1},y_{n+1})$ is added to the training set, and the process repeats. In practice, this algorithm is (embarrassingly) parallelized by independently drawing $\kappa > 1$ functions and evaluating a minimizer of each one \cite{hernandez2017parallel,kandasamy2018parallelised}.

We compare the performance of parallel TS equipped with the various sampling schemes discussed in Section~\ref{sec:methods}, along with two common baselines. To help eliminate confounding variables, experiments were run using functions drawn from known GP priors with fixed measurement noise $y_i \sim \c{N}(f_i, 10^{-3})$. 
Across trials, we varied both the dimensionality $d$ of search spaces $\c{X} = [0, 1]^{d}$ and the number of initial basis functions $\ell$. We set $\kappa = d$, but this choice was not found to greatly influence results. The total number of basis functions allocated to weight-space and decoupled samplers was again matched, so that $\numBasisTotal = m + \ell$.

Figure~\ref{fig:thompson_sampling_results} shows that different methods of sampling from GP posteriors dramatically influence achieved performance.
While all methods suffered from the curse of dimensionality, TS in function-space deteriorates most aggressively, owing to its inability to efficiently exploit gradient information and to the prohibitive cost for generating large sample vectors $\v{f}_{\test} \given \v{y}$. Weight-space TS resolves both of these issues and, therefore, performs competitively\textemdash so long as $\numBasisTotal \gg m$, in which case it accurately approximates the posterior. On the other hand, TS in weight-space collapses due to variance starvation as $m$ increases relative to $b$, often performing worse than simpler alternatives.

\xmakefirstuc{\METHOD{}} avoids these shortcomings. As function draws, decoupled sample paths $(f \given \v{y})(\m{X}_{*})$ boast linear time complexity $\c{O}(*)$ and can be minimized by pathwise differentiating with respect to $\m{X}_{*}$. Moreover, because the canonical basis is able to efficiently represent the data, these sample paths retain their statistical properties even when $\numBasisTotal$ is comparable to $n$ or, in the case of sparse GPs, when $\numBasisTotal \ll n$.

\newcommand{\state}{\ensuremath{s}}
\newcommand{\ctrl}{\ensuremath{c}}
\textbf{Simulating dynamical systems in linear time.} Model-based simulators are commonly used in cases where real-world data collection proves impractical or impossible. For example, GP surrogates are a key component of state-of-the-art methods for solving the types of continuous control problems seen in robotics \cite{Deisenroth2015, kamthe2017data}.  Without loss of generality, we assume that our goal is to model a time-invariant system whose dynamics are governed by a stochastic differential equation, discretized according to the the Euler-Maruyama integrator
\[
\Delta \v{s}_{t}
    = \v{\state}_{t + 1} - \v{\state}_{t}
    = f(\v{\state}_{t}, \v{\ctrl}_{t}) \Delta t + \sqrt{\Delta t \m{\Sigma}} \v{\varepsilon}_{t},
\label{eqn:euler_maruyama}
\]
where $\v{\state}_{t}$ denotes the state at time $t$, $\v{\ctrl}_{t} \in \c{U} \subseteq \mathbb{R}^{\ctrl}$ a control input, and $\v{\varepsilon}_{t} \sim \c{N}(\v{0}, \m{I})$ a standard normal random vector. 

Having trained a (sparse) GP to represent possible drift functions $f$, we simulate the system's evolution over time by \emph{unrolling}: given a state-control pair $(\v{\state}_{t}, \v{\ctrl}_{t})$, we sample a transition $\Delta \v{\state}_{t}$ according to the GP posterior and step as in \eqref{eqn:euler_maruyama}. Since the resulting trajectory $\m{\uppercase{\state}}_{1:t}$ is determined online, standard approaches to sampling require us to iteratively condition on the preceding sample $f_{t}$ when drawing $f_{t+1} \given \v{f}_{1:t}$. Use of caching and rank-1 downdates help limit associated costs however, the resulting algorithm's time complexity still scales cubically in the number of steps $t$ (see Appendix~\ref{apdx:additional_experiments_dyn}). By virtue of drawing functions, \METHOD{} avoids this machinery and allows us to simulate trajectories in linear time $\c{O}(t)$. 

To better understand the practical ramifications of unrolling with decoupled samples, we used a sparse GP to simulate the dynamics of a well-known model of a biological neuron \cite{fitzhugh1961impulses, nagumo1962active}. Results are shown in Figure~\ref{fig:fitzhugh_nagumo}. For both sampling schemes, simulated trajectories accurately characterizes the ways in which the system may respond to a given control signal. Their respective costs, however, vary dramatically: simulations that required 10 hours using the iterative approach, owing to cubic costs, ran in 20 seconds using decoupled sampling while achieving similar accuracy.

\section{Conclusion}
\label{sec:conclusion}

Decomposing Gaussian processes is a general strategy for constructing efficient approximation schemes. We have focused on a particular case, where a posterior is seen as the sum of a prior and an update, and shown how this decoupling can be exploited to efficiently draw functions from said posterior. Even within this choice of decomposition however, optimal treatment of these components will ultimately depend upon the nature of the task at hand. For example, when working with structured covariance matrices, it is sometimes possible to efficiently generate draws from the prior without introducing approximation error \cite{dietrich97, wilson2015kernel}. These alternatives can then be combined with ideas discussed in previous sections to achieve the desired balance of speed versus accuracy.

Owing to the generality of our assumptions and simplicity of our proposals, \METHOD{} can be used as a plug-in extension to existing sample-based algorithms driven by (sparse) GPs. Separately representing the prior and the data with bases better suited for sampling allows us to obtain the "best of both worlds" by bringing together previous methods' strengths. The result of this union, \METHOD{}, draws functions from GPs that may be evaluated in linear time without fear of misrepresenting their posteriors.

\section*{Acknowledgments}
The authors would like to express their gratitude to Prof. Mikhail Lifshits, without whom our collaboration would never have started. This research was partially supported by "Native towns", a social investment program of PJSC "Gazprom Neft" and by the Ministry of Science and Higher Education of the Russian Federation, agreements N\textsuperscript{\underline{o}} 075-15-2019-1619 and N\textsuperscript{\underline{o}} 075-15-2019-1620. The support of the EPSRC Centre for Doctoral Training in High Performance Embedded and Distributed Systems (reference EP/L016796/1) is gratefully acknowledged.

\printbibliography

\onecolumn
\appendix

\section{Alternative decompositions}
\label{apdx:discussion}

As mentioned in the Section~\ref{sec:decoupled_sampling}, the proposed representation of the GP posteriors---as the sum of a weight-space prior and a function-space update---is one of many possible choices. Here, we briefly reflect on two such alternatives. 

To begin with, we may directly represent sparse GP posteriors in weight-space via a Bayesian linear model $f(\cdot) = \v{\phi}(\cdot)^{\top}\v{w}$. To this end, we may rewrite \eqref{eqn:matheron_gp_rff} for a given draw $\v{u} \sim q(\v{u})$ as
\[
\v{w} \given \v{u} 
    \overset{\d}{=} 
        \v{w} + \m{\Phi}^{\top}
        (\m{\Phi}\m{\Phi}^{\top})^{-1}
        (\v{u} - \m{\Phi}\v{w}),
    \label{eqn:matheron_gp_rff_sparse}
\]
where $\m{\Phi} = \v{\phi}(\m{Z})$ now denotes an $m \times \ell$ feature matrix. Prima facie, this appears to resolve many of the problems discussed earlier in the text: inducing distribution $q(\v{u})$ relays information about $\v{y}$ and the Bayesian linear model needs only explain for the function's behavior at $m \ll n$ locations. In practice, \eqref{eqn:matheron_gp_rff_sparse} does more harm than good however, since $f$ must now exactly pass through $\v{u}$ due to a lack of measurement noise $\sigma^{2}$.

Alternatively, we may think to employ an \emph{orthogonal decomposition} $f(\cdot) = f_{\parallel}(\cdot) + f_{\bot}(\cdot)$ \cite{salimbeni18,shi19}. Here, we interpret "orthogonality" in the statistical sense of independent random variables \cite{rodgers1984linearly}. For Gaussian random variables, this distinction amounts to satisfying the definition $\Cov(f_{\parallel}, f_{\bot}) = 0$. In the case of sparse GPs, $f_{\parallel}$ is typically represented in terms of canonical basis functions $k(\cdot, \m{Z})$ such that $(f_{\parallel} \given \v{u})(\cdot)$ denotes the posterior mean function given $q(\v{u})$. Consequently, $f_{\bot}$ denotes the process residuals $(f_{\bot} \given \v{u})(\cdot) = (f \given \v{u})(\cdot) - (f_{\parallel} \given \v{u})(\cdot)$. By construction however, $f_{\bot}$ is independent of $f_{\parallel}$ and, hence, of particular values $\v{u}$. Moreover, since $(f \given \v{u})(\m{Z}) = (f_{\parallel} \given \v{u})(\m{Z}) = \v{u}$, it follows that $f_{\bot}(\m{Z}) = (f_{\bot} \given \v{u})(\m{Z}) = \m{0}$. 

Generating draws from this type of decomposition is made difficult by orthogonal component $f_{\bot} \given \v{u}$, whose covariance can readily be shown as
\[
\Cov(f_{\bot}, f_{\bot})
    = k(\cdot, \cdot) - k(\cdot, \m{Z})\m{K}_{m, m}^{-1} k(\m{Z}, \cdot).
\]
Sampling schemes based on random Fourier feature approximations of $f_{\bot}$ are nearly identical to \eqref{eqn:matheron_gp_rff_sparse}: all that has changed is that the Bayesian linear model must now pass exactly through zero, rather than $\v{u}$, at each of the $m$ inducing locations. This approach to sampling therefore inherits the issues outlined above.

\section{Error analysis}
\label{apdx:error-analysis}

\begin{definition}[Preliminaries]
Consider a Gaussian process $f$ defined on $\R^d$ and restricted to a compact subset $\c{X} \subseteq \R^d$. 
Let $\v{y} \in \R^n$.
Assume a Gaussian likelihood $y_i \sim \c{N}(f(x_i),\sigma^2)$, with $\sigma^2 \geq 0$.
Let $f^{(w)}$ be a weight-space prior approximation.
Let $f \given \v{y}$ be the true posterior, let $f^{(s)}$ be an inducing point approximate posterior, and let $f^{(d)}$ be the decoupled posterior approximation.
Let $k, k^{(w)}, k^{(f\given\v{y})}, k^{(s)}, k^{(d)}$ be their respective kernels.
\end{definition}

\begin{proposition}
We have that
\[
W_{2,L^2(\c{X})}\del{f^{(d)}, f \given \v{y}} \leq W_{2,L^2(\c{X})}\del{f^{(s)}, f \given \v{y}} + C_1\,W_{2,L^\infty(\c{X})}\del{f^{(w)}, f}
\]
where $C_1 = \sqrt{2 \f{diam} (\c{X})^d \left( 1 + \norm[1]{k}_{C(\c{X}^2)}^2 \norm{\m{K}_{mm}^{-1}}_{L(\ell^\infty; \ell^1)}^2 \right)}$, $W_{2,L^2(\c{X})}$ and $W_{2,C(\c{X})}$ are the 2-Wasserstein distances over $L^2(\c{X})$ and the space of continuous functions $C(\c{X})$ equipped with the supremum norm, respectively, and $\norm{\cdot}_{L(\ell^\infty; \ell^1)}$ is the corresponding operator norm of a matrix.
\end{proposition}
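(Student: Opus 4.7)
The overall plan is to apply the triangle inequality for the 2-Wasserstein distance and then bound the remaining term by constructing an explicit coupling via Matheron's rule. Concretely, I would first write
\[
W_{2,L^2(\c{X})}\bigl(f^{(d)}, f\given\v{y}\bigr)
\leq W_{2,L^2(\c{X})}\bigl(f^{(d)}, f^{(s)}\bigr) + W_{2,L^2(\c{X})}\bigl(f^{(s)}, f\given\v{y}\bigr),
\]
which already isolates the "sparse posterior" term on the right. All the work then goes into showing that the first summand is at most $C_1\,W_{2,L^\infty(\c{X})}(f^{(w)},f)$.

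For that bound I would exploit the fact that, by Corollary~\ref{cor:matheron_gp}, both $f^{(s)}$ and $f^{(d)}$ can be written as a prior draw plus the \emph{same} function-space update functional applied to residuals at $\m{Z}$. Concretely, I would couple them by (i) taking the $W_{2,L^\infty}$-optimal coupling of $(f,f^{(w)})$ and (ii) using the same draw $\v{u}\sim q(\v{u})$ (independent of this pair) on both sides. Under this coupling Matheron's rule gives the pointwise identity
\[
f^{(d)}(x)-f^{(s)}(x) \;=\; \bigl(f^{(w)}(x)-f(x)\bigr) \;-\; k(x,\m{Z})\,\m{K}_{m,m}^{-1}\bigl(f^{(w)}(\m{Z})-f(\m{Z})\bigr),
\]
since the $\v{u}$ terms cancel. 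This identity is the crux: it removes all randomness coming from $\v{u}$ and leaves only the prior-approximation error.

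Next I would bound this difference pointwise. Using $(a+b)^2 \leq 2a^2+2b^2$, applying Hölder's inequality via the stated operator norm ($\m{K}_{m,m}^{-1}\colon \ell^\infty\to\ell^1$ so that $|k(x,\m{Z})\m{K}_{m,m}^{-1}w| \leq \|k(x,\m{Z})\|_{\ell^\infty}\|\m{K}_{m,m}^{-1}\|_{L(\ell^\infty;\ell^1)}\|w\|_{\ell^\infty}$), and bounding $\|k(x,\m{Z})\|_{\ell^\infty}\leq \|k\|_{C(\c{X}^2)}$ and $\|f^{(w)}(\m{Z})-f(\m{Z})\|_{\ell^\infty}\leq \|f^{(w)}-f\|_{L^\infty(\c{X})}$, I get
\[
|f^{(d)}(x)-f^{(s)}(x)|^2 \leq 2\bigl(1+\|k\|_{C(\c{X}^2)}^2\|\m{K}_{m,m}^{-1}\|_{L(\ell^\infty;\ell^1)}^2\bigr)\,\|f^{(w)}-f\|_{L^\infty(\c{X})}^2.
\]
Integrating over $\c{X}$ contributes a factor of $\mathrm{vol}(\c{X}) \leq \mathrm{diam}(\c{X})^d$, yielding $\|f^{(d)}-f^{(s)}\|_{L^2(\c{X})}\leq C_1\,\|f^{(w)}-f\|_{L^\infty(\c{X})}$ almost surely for the $C_1$ stated. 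Taking expectations under the optimal coupling and using the definition of $W_2$ in both spaces gives $W_{2,L^2(\c{X})}(f^{(d)},f^{(s)}) \leq C_1\,W_{2,L^\infty(\c{X})}(f^{(w)},f)$, and combining with the triangle inequality completes the proof.

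The main obstacle I anticipate is the bookkeeping around the mixed norms: one must verify that the Hölder pairing with the $\ell^\infty\!\to\!\ell^1$ operator norm is the tightest choice compatible with having only an $L^\infty$-Wasserstein control on $f^{(w)}-f$ (since we only evaluate at the finite set $\m{Z}$, a uniform bound is what propagates naturally), and one must also justify that the coupling constructed above is admissible, i.e.\ that the law of $(f^{(d)},f^{(s)})$ it induces has the correct marginals — which follows because $\v{u}$ is drawn independently of the $(f,f^{(w)})$ pair and each row of Matheron's formula, read separately, reproduces the correct marginal distribution.
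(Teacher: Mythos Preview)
Your proposal is correct and follows essentially the same route as the paper: triangle inequality in $W_{2,L^2(\c{X})}$, then a pathwise bound on $f^{(d)}-f^{(s)}$ via Matheron's rule using $(a+b)^2\le 2a^2+2b^2$, H\"older with the $\ell^\infty\!\to\!\ell^1$ operator norm, and finally integration over $\c{X}$ (picking up $\operatorname{diam}(\c{X})^d$) before infimizing over couplings of $(f,f^{(w)})$. If anything, you are slightly more explicit than the paper about why the shared $\v{u}$ makes the coupling admissible and about the cancellation of the $\v{u}$-terms in the pointwise identity.
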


\begin{proof}
By the triangle inequality, we have
\[
W_{2,L^2(\c{X})}\del{f^{(d)}, f \given \v{y}} \leq W_{2,L^2(\c{X})}\del{f^{(d)}, f^{(s)}} + W_{2,L^2(\c{X})}\del{f^{(s)}, f \given \v{y}}
.
\]
We proceed bound the first term pathwise.
For arbitrary $x\in M$, write
\<
\label{eqn:error-analysis-line-1}
\abs{f^{(d)}(x) - f^{(s)}(x)}^2 &\leq 2 \del{\abs{f^{(w)}(x) - f(x)}^2 + \abs{\m{K}_{xm}\m{K}_{mm}^{-1}(f^{(w)}(\v{z}) - f(\v{z}))}^2}
\\
\label{eqn:error-analysis-line-2}
&\leq 2 \del{\norm{f^{(w)} - f}_{L^\infty(\c{X})}^2 + \norm{\m{K}_{xm}\m{K}_{mm}^{-1}}_{\ell^1}^2 \norm{f^{(w)}(\v{z}) - f(\v{z})}_{\ell^\infty}^2}
\\
\label{eqn:error-analysis-line-3}
&\leq 2 \del{\norm{f^{(w)} - f}_{L^\infty(\c{X})}^2 + \norm{\m{K}_{xm}}_{\ell^\infty}^2\norm{\m{K}_{mm}^{-1}}_{L(\ell^\infty; \ell^1)}^2 \norm{f^{(w)} - f}_{L^\infty(\c{X})}^2}
\\
\label{eqn:error-analysis-line-4}
&\leq 2 \del{1 + \norm[1]{k}_{C(\c{X}^2)}^2 \norm{\m{K}_{mm}^{-1}}_{L(\ell^\infty; \ell^1)}^2} \norm{f^{(w)} - f}_{L^\infty(\c{X})}^2
\\
\label{eqn:error-analysis-line-5}
&= 2 \del{1 + \norm[1]{k}_{C(\c{X}^2)}^2 \norm{\m{K}_{mm}^{-1}}_{L(\ell^\infty; \ell^1)}^2} \norm{f^{(w)} - f}_{C(\c{X})}^2
\>
where in \eqref{eqn:error-analysis-line-1} we have used Matheron's rule, in \eqref{eqn:error-analysis-line-2} we have used H\"{o}lder's inequality with $p=1, q=\infty$, in \eqref{eqn:error-analysis-line-3} we have used the definition of an operator norm, and in \eqref{eqn:error-analysis-line-5} we have used that given sample paths are continuous so $\norm{\cdot}_{L^\infty(\c{X})}$ can be replaced with $\norm{\cdot}_{C(\c{X})}$.
We now lift this to a bound on the Wasserstein distance by integrating both sides.
With $\gamma \in \c{C}$ denoting couplings between $\c{GP}(0, k)$ and $\c{GP}(0, k^{(w)})$, write
\<
W_{2,L^2(\c{X})}^2(f^{(d)}, f^{(s)}) &\leq \inf_{\gamma\in\c{C}} \int \norm{f^{(d)} - f^{(s)}}_{L^2(\c{X})}^2 \d\gamma
\\
&\leq  C |\c{X}| \inf_{\gamma\in\c{C}} \int \norm{f^{(w)} - f}_{C(\c{X})}^2 \d\gamma
\\
&=  C \f{diam}(\c{X})^d\, W_{2,C(\c{X})}^2(f^{(w)}, f)
\>
where $C$ is the constant above.
Finally, note that $f$ is sample-continuous, and $C(\c{X})$ is a separable metric space, so $W_{2,C(\c{X})}$ is a proper metric. 
The claim follows.
\end{proof}

\begin{proposition}
Assume $k$ is stationary continuous covariance defined on $\mathbb{R}^d \times \mathbb{R}^d$, $\c{X} \subseteq \mathbb{R}^d$ is compact.
We have that
\[
\E_{\substack{\v\omega \sim \rho\\\upsilon \sim U}} \norm{k^{(d)} - k^{(f \given \v{y})}}_{C(\c{X}^2)} \leq \norm{k^{(s)} - k^{(f \given \v{y})}}_{C(\c{X}^2)} + \frac{C_2 C_3}{\sqrt{\ell}}
\]
where $\norm{\cdot}_{C(\c{X}^2)}$ is the supremum norm over continuous functions, $C_2$ is the constant given by \textcite{sutherland15}, which depends only on the Lipschitz constant of $k$, the rate of decay of the spectral density $\rho$, the dimension $d$, and the diameter of the domain $\c{X}$, and $C_3 = m \sbr{1 + \norm{\m{K}^{-1}_{m,m}}_{C(\c{X}^2)} \norm{k}_{C(\c{X}^2)}}^2$.
\end{proposition}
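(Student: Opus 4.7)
The plan is to reduce the problem to bounding the RFF approximation error on the prior kernel. By the triangle inequality applied in $C(\c{X}^2)$,
\[
\norm[1]{k^{(d)} - k^{(f\given\v{y})}}_{C(\c{X}^2)}
\leq
\norm[1]{k^{(d)} - k^{(s)}}_{C(\c{X}^2)}
+
\norm[1]{k^{(s)} - k^{(f\given\v{y})}}_{C(\c{X}^2)}.
\]
The second term is deterministic under $\v{\phi}$ and appears directly on the right-hand side of the claim, so taking $\E_{\v{\phi}}$ reduces the task to bounding $\E_{\v{\phi}}\norm{k^{(d)} - k^{(s)}}_{C(\c{X}^2)}$ by $C_2 C_3/\sqrt{\ell}$.

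Next, I would compute $k^{(d)} - k^{(s)}$ in closed form. Matheron's rule (Corollary~\ref{cor:matheron_gp}) gives $f^{(d)}(\cdot) = f^{(w)}(\cdot) + k(\cdot,\m{Z})\m{K}_{m,m}^{-1}(\v{u} - f^{(w)}(\m{Z}))$, with $f^{(w)}$ and $\v{u}$ independent; the analogue for $f^{(s)}$ replaces $f^{(w)}$ by the exact prior $f$. Setting $\v{\phi}_x := k(x,\m{Z})\m{K}_{m,m}^{-1}$ and expanding the covariance (the $\v{u}$-contribution $\v{\phi}_x \m{\Sigma}_{\v{u}}\v{\phi}_{x'}^\top$ cancels across the two processes), and writing $\delta := k^{(w)} - k$, one obtains the clean identity
\[
k^{(d)}(x,x') - k^{(s)}(x,x')
= \delta(x,x') - \v{\phi}_x\,\delta(\m{Z},x') - \delta(x,\m{Z})\,\v{\phi}_{x'}^\top + \v{\phi}_x\,\delta(\m{Z},\m{Z})\,\v{\phi}_{x'}^\top.
\]
This factors as a bilinear form that I can bound pointwise by $\norm{\delta}_{C(\c{X}^2)}(1 + \|\v{\phi}_x\|_\star)(1 + \|\v{\phi}_{x'}\|_\star)$ for a suitable norm $\|\cdot\|_\star$ (matched by H\"older's inequality to the norm in which $\delta(\m{Z},\m{Z})$ is measured entrywise). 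Taking the supremum over $(x,x') \in \c{X}^2$ yields
\[
\norm[1]{k^{(d)} - k^{(s)}}_{C(\c{X}^2)} \leq C_3 \cdot \norm[1]{k^{(w)} - k}_{C(\c{X}^2)},
\]
where $C_3$ aggregates the supremum of $(1+\|\v{\phi}_x\|_\star)^2$ over $x\in\c{X}$, giving the $m\bigl[1 + \|\m{K}_{m,m}^{-1}\|_{C(\c{X}^2)}\|k\|_{C(\c{X}^2)}\bigr]^2$ expression stated.

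Finally, I would invoke the dimension-free uniform RFF bound of \textcite{sutherland15}: since $k^{(w)}$ is the RFF estimator of the stationary continuous kernel $k$, one has $\E_{\v{\phi}}\norm{k^{(w)} - k}_{C(\c{X}^2)} \leq C_2/\sqrt{\ell}$ with $C_2$ depending only on the Lipschitz constant of $k$, the decay of the spectral density, $d$, and $\f{diam}(\c{X})$. Combining this with the deterministic bound above and the initial triangle inequality gives the stated inequality.

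The main obstacle is step (iii): getting the constant $C_3$ in precisely the form advertised requires choosing the matching pair of norms in H\"older's inequality so that $\delta(\m{Z},\m{Z})$ is controlled by $\norm{\delta}_{C(\c{X}^2)}$ while $\v{\phi}_x$ is controlled by $\|k\|_{C(\c{X}^2)}$ and $\|\m{K}_{m,m}^{-1}\|_{C(\c{X}^2)}$, and keeping careful track of the factor of $m$ arising from converting between $\ell^p$-norms on $\mathbb{R}^m$. Everything else is a mechanical expansion or a direct citation.
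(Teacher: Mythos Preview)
Your proposal is correct and follows essentially the same route as the paper: triangle inequality, explicit computation of $k^{(d)}-k^{(s)}$ as a bilinear expression in $\delta=k^{(w)}-k$, a pointwise H\"older bound yielding the factor $C_3$, and finally the \textcite{sutherland15} bound on $\E\norm{\delta}_{C(\c{X}^2)}$. The only cosmetic difference is that the paper packages your four-term expansion as a bounded linear operator $M_k:C(\c{X}^2)\to C(\c{X}^2)$ with $k^{(d)}-k^{(s)}=M_k\delta$ and then bounds $\norm{M_k}_{L(C;C)}$, whereas you work directly with the pointwise expression; the underlying estimates (H\"older with $p=1,q=\infty$, the factor of $m$ from the $\ell^\infty\!\to\!\ell^1$ operator norm) are identical.
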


\begin{proof}
By the triangle inequality, we have
\[
\E_{\substack{\v\omega \sim \rho\\\upsilon \sim U}} \norm{k^{(d)} - k^{f \given \v{y}}}_{C(\c{X}^2)} \leq \E_{\substack{\v\omega \sim \rho\\\upsilon \sim U}} \norm{k^{(d)} - k^{(s)}}_{C(\c{X}^2)} + \norm{k^{(s)} - k^{f \given \v{y}}}_{C(\c{X}^2)}
\]
where we have used that the latter term does not depend on $\v\omega$.
We proceed to bound the inner portion of the first term.
Define the bounded linear operator $M_k: C(\c{X}\times\c{X})\to C(\c{X}\times\c{X})$ by the expression
\[
(M_k c)(x,x') = c(x,x') - \m{C}_{x,m}\m{K}^{-1}_{m,m}\m{K}_{m,x'} - \m{K}_{x,m}\m{K}^{-1}_{m,m}\m{C}_{m,x'} + \m{K}_{x,m}\m{K}^{-1}_{m,m}\m{C}_{m,m}\m{K}^{-1}_{m,m}\m{K}_{m,x'}
.
\]
Let $\m\Sigma = \Cov(\v{u})$. By explicit calculation, we have
\[
k^{(d)}(x,x') = (M_k k^{(w)})(x,x') + \m{K}_{x,m}\m{K}^{-1}_{m,m}\m\Sigma\m{K}^{-1}_{m,m}\m{K}_{m,x'}
\]
and we also have
\[
k^{(s)}(x,x') = k^{(f\given\v{y})}(x,x') + \m{K}_{x,m}\m{K}^{-1}_{m,m}\m\Sigma\m{K}^{-1}_{m,m}\m{K}_{m,x'}
\]
hence
\[
\norm{k^{(d)} - k^{(s)}}_{C(\c{X}^2)} = \norm[1]{M_k k^{(w)} - k^{(f\given\v{y})}}_{C(\c{X}^2)} = \norm[1]{M_k k^{(w)} - M_k k}_{C(\c{X}^2)} \leq \norm[1]{M_k}_{L(C;C)} \norm[1]{k^{(w)} - k}_{C(\c{X}^2)}
.
\]
We proceed to bound the operator norm  $\norm[1]{M_k}_{L(C;C)}$.
Write
\<
\norm{M_k c}_{C(\c{X}^2)} &\leq \norm{c}_{C(\c{X}^2)} + \norm{\m{C}_{\cdot,m}\m{K}^{-1}_{m,m}\m{K}_{m,\cdot}}_{C(\c{X}^2)} + \norm{\m{K}_{\cdot,m}\m{K}^{-1}_{m,m}\m{C}_{m,\cdot}}_{C(\c{X}^2)}
\\
&\quad+ \norm{\m{K}_{\cdot,m}\m{K}^{-1}_{m,m}\m{C}_{m,m}\m{K}^{-1}_{m,m}\m{K}_{m,\cdot}}_{C(\c{X}^2)}
.
\>
Now, note that
\<
\norm{\m{C}_{\cdot,m}\m{K}^{-1}_{m,m}\m{K}_{m,\cdot}}_{C(\c{X}^2)} &= \sup_{x,x' \in \c{X}} \sbr{ \m{C}_{x,m}\m{K}^{-1}_{m,m}\m{K}_{m,x'} } 
\\
&\leq \sup_{x,x' \in \c{X}} \sbr{ \norm{\m{C}_{x,m}}_{\ell^\infty} \norm{\m{K}^{-1}_{m,m}}_{L(\ell^\infty; \ell^1)} \norm{\m{K}_{m,x'}}_{\ell^\infty} } 
\\
&\leq \norm{c}_{C(\c{X}^2)} \norm{\m{K}^{-1}_{m,m}}_{L(\ell^\infty; \ell^1)} \norm{k}_{C(\c{X}^2)}
\>
by H\"{o}lder's inequality with $p=1$ and $q=\infty$, and then by the definition of the operator norm $\norm{\cdot}_{L(\ell^\infty; \ell^1)}$. Similarly
\[
\norm{\m{K}_{\cdot,m}\m{K}^{-1}_{m,m}\m{C}_{m,m}\m{K}^{-1}_{m,m}\m{K}_{m,\cdot}}_{C(\c{X}^2)} \leq m \norm{c}_{C(\c{X}^2)} \norm{\m{K}^{-1}_{m,m}}_{L(\ell^\infty; \ell^1)}^2 \norm{k}_{C(\c{X}^2)}^2
\]
hence
\<
\norm{M_k c}_{C(\c{X}^2)} &\leq \norm{c}_{C(\c{X}^2)} + 2\norm{c}_{C(\c{X}^2)}\norm{\m{K}^{-1}_{m,m}}_{L(\ell^\infty; \ell^1)} \norm{k}_{C(\c{X}^2)} + m \norm{c}_{C(\c{X}^2)}\norm{\m{K}^{-1}_{m,m}}_{L(\ell^\infty; \ell^1)}^2 \norm{k}_{C(\c{X}^2)}^2 
\\
&\leq \norm{c}_{C(\c{X}^2)} \left( m \sbr{1 + \norm{\m{K}^{-1}_{m,m}}_{L(\ell^\infty; \ell^1)} \norm{k}_{C(\c{X}^2)}}^2 \right)
\>
and therefore
\[
\norm{M_k}_{L(C;C)} = \sup_{c \neq 0} \frac{\norm{M_k c}_{C(\c{X}^2)}}{\norm{c}_{C(\c{X}^2)}} \leq m \sbr{1 + \norm{\m{K}^{-1}_{m,m}}_{L(\ell^\infty; \ell^1)} \norm{k}_{C(\c{X}^2)}}^2
.
\]
Note that this term is independent of $\v\omega$, and hence constant with respect to the expectation.
Finally, \textcite{sutherland15} have shown that there exists a constant $C_2$ such that.
\[
\E_{\substack{\v\omega \sim \rho\\\upsilon \sim U}} \norm[1]{k^{(w)} - k}_{C(\c{X}^2)} \leq \frac{C_2}{\sqrt{\ell}}
.
\]
Putting together all the inequalities gives the result.
\end{proof}

\section{Additional experiments}
\label{apdx:additional_experiments}

This appendix provides additional details regarding experiments discussed in Section~\ref{sec:experiments}. All experiments (and figures) were run using zero-mean GP priors with Mat\'{e}rn-$\nicefrac{5}{2}$ kernels. For dynamical systems experiments, hyperparameters were learned (MLE type-2). In all other cases, hyperparameters were assumed to be known and specified as: lengthscales $l = \sqrt{\nicefrac{d}{100}}$, measurement noise variance $\sigma^{2} = 10^{-3}$, and kernel amplitude $\alpha = 1$.

\subsection{2-Wasserstein sample tests} 
\label{apdx:additional_experiments_2wass}
In each trial, a set of training locations $\m{X} \sim U[0, 1]^{n \times d}$ was randomly generated and corresponding observations $\v{y} \sim \c{N}(\v{0}, \m{K}_{n,n} + \sigma^{2}\m{I})$ were subsequently drawn from the prior. Similarly, test sets $\m{X}_{*} \sim U[0, 1]^{* \times d}$ were sampled uniformly at random. For each sampling schemes, $100,000$ draws $\v{f}_{*} \given \v{y}$ were then used to form an unbiased estimate $(\tilde{\v{m}}_{* \given n}, \widetilde{\m{K}}_{*,* \given n})$ to the true posterior moments $(\v{m}_{* \given n}, \m{K}_{*,* \given n})$. Given both sets of moments, 2-Wasserstein distances were computed as
\begin{align}
\begin{split}
    &W_{2,\ell^{2}_*}
    \left(
        \c{N}(\v{m}_{*\given n}, \m{K}_{*,* \given n}), 
        \c{N}(\tilde{\v{m}}_{*\given n}, \widetilde{\m{K}}_{*,* \given n})
    \right)^2
    =\\
    &\qquad \qquad
    \norm{\v{m}_{*\given n} - \widetilde{\v{m}}_{*\given n}}^2
    +
    \tr\left(
        \m{K}_{*,* \given n}
        + \widetilde{\m{K}}_{*,* \given n}
        -2 \left(
            \m{K}_{*,* \given n}^{\nicefrac{1}{2}}
            \widetilde{\m{K}}_{*,* \given n}
            \m{K}_{*,* \given n}^{\nicefrac{1}{2}}
        \right)^{\nicefrac{1}{2}}
    \right),
\end{split}
\end{align}
where $\m{K}_{*,* \given n}^{\nicefrac{1}{2}}$ denotes the symmetric matrix square root, and $W_{2,\ell^{2}_*}$ denotes the 2-Wasserstein distance between probability measures over $*$-dimensional vectors equipped with Euclidean distance.

As an additional baseline, we compared decoupled sampling with a LanczOs Variance Estimates (LOVE) based alternative \cite{pleiss2018constant}. The LOVE approach to sampling from GP posteriors exploits structured covariance matrices in conjunction with fast (approximate) solvers to achieve linear time complexity with respect to number of test locations. For example, when inducing locations $\m{Z}$ are defined to be a regularly spaced grid, the prior covariance $\m{K}_{m,m} = k(\m{Z}, \m{Z})$ can be expressed as the Kronecker product of Toeplitz matrices---a property that can be used to dramatically expedite much of the related linear algebra \cite{zimmerman1989computationally, saatcci2012scalable, wilson2015kernel}.

Here, we are interested in comparing the performance of sampling schemes themselves and not that of approximate GPs. As before, we will therefore sample from exact GPs with known hyperparameters. As an additional caveat however, we now define training locations as regularly spaced grids, such that LOVE may represent the data exactly. Similarly, we allow LOVE to utilize $n$ conjugate gradient iterations during precomputation.

\begin{figure*}
\center
\includegraphics[width=\textwidth]{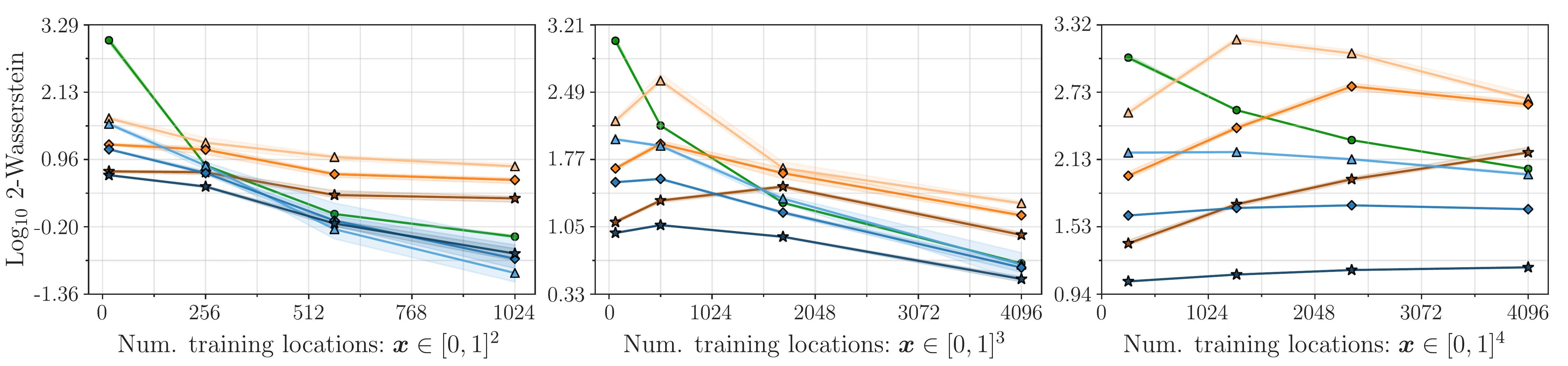}
\caption{Medians and interquartile ranges of empirically estimated 2-Wasserstein distances measured over 32 independent trials consisting of 100,000 samples. LOVE (green) improves as the regularly spaced grids of training locations fill the space. Weight-space (orange) and decoupled (blue) sampling utilized a total of $\numBasisTotal = m + \ell$~basis functions. Results using~$\ell \in \{1024, 4096, 16384\}$ initial bases correspond with $\{\text{light}, \text{medium}, \text{dark}\}$ tones and
$\{
    \bigtriangleup, 
    \hbox{\scalebox{1.5}{$\diamond$}},
    \hbox{{\faStarO}}
\}$ 
markers.
}
\label{fig:wasserstein2_test_extended}
\end{figure*}

Results of these experiments are show in Figure~\ref{fig:wasserstein2_test_extended}. LOVE's performance improves significantly as $m = n$ increases but still lags behind that of decoupled sampling for matching $m$. Several points are immediately worth addressing here. First, kernel interpolation methods such as LOVE offer improved scaling w.r.t. $m$ when compared to na\"ive inducing point methods (even when additional structure is imposed on $\m{Z}$). LOVE can therefore utilize many more inducing locations than traditional sparse GPs in exchange for the imposed structural constraints. Assessing the relative merits of these inducing paradigms is beyond the scope of this work. Second, during sample generation, LOVE exhibits $\c{O}(m + *)$ time complexity, compared to decoupled sampling's $\c{O}(m \times *)$. Third, LOVE samples function values $\v{f}_{*}$ at locations $\m{X}_{*}$ whereas decoupled sampling generates function draws $(f \given \v{u})(\cdot)$, the implications of which were previously explored in Section~\ref{sec:experiments}. Fourth and finally, the techniques and ideas espoused by these frameworks are complementary: just as we may approximate the prior via a collection of Fourier features, we may approximate the update via, e.g., kernel interpolation.

\subsection{Thompson sampling}
\label{apdx:additional_experiments_ts}
As baselines, we compared against Random Search \cite{bergstra2012random} and Dividing Rectangles \cite{jones1993lipschitzian}, the latter of which was run in strictly sequential fashion (i.e., $\kappa = 1$). Minimization tasks were drawn from a known GP prior (see above) and their global minimums were estimated by running gradient descent from a large number of starting locations (for purposes of measuring regret). Here, we discuss algorithmic differences between variants of TS.

For function-space TS, batches were constructed as follows.
\begin{enumerate}[itemsep=0pt,topsep=2pt]
    \item Construct a mesh $\m{X}_{*}$ consisting of $\vert \m{X}_{*} \vert = 10^{6}$ random points.
    \item Draw a vector of independent values $\v{f}_{*} \given \v{y} \sim \c{N}(\v{m}_{* \given n}, \m{K}_{*, * \given n} \odot \m{I})$, where $\odot$ is the element-wise product.
    \item Define an active set $\m{X}_{s} \subseteq \m{X}_{*}$ corresponding to the $s = 2048$ smallest elements of $\v{f}_{*} \given \v{y}$.
    \item Jointly sample $\v{f}_{s} \given \v{y} \sim \c{N}(\v{m}_{s \given n}, \m{K}_{s, s \given n})$.
    \item Select $\v{x}_{i} \in \argmin_{1 \le i \le s} \v{f}_{s} \given \v{y}$ as the $i$-th batch element.
\end{enumerate}
For simplicity, a new mesh $\m{X}_{*}$ was generated at each TS iteration and shared between batch elements, but steps (2-5) we run independently. Weight-space and decoupled TS employed a similar procedure, with minor differences stemming from use of function draws.
\begin{enumerate}[itemsep=0pt,topsep=2pt]
    \item Construct a mesh $\m{X}_{*}$ consisting of $\vert \m{X}_{*} \vert = 250,000$ random points.
    \item Generate a function draw $(f \given \v{y})(\cdot)$.
    \item Define starting locations $\m{X}_{s} \subseteq \m{X}_{*}$ corresponding to the $s = 32$ smallest elements of $(f \given \v{y})(\m{X}_{*})$.
    \item Run multi-start gradient-based optimization: we employed an off-the-shelf version of L-BFGS-B.
    \item Select $\v{x}_{i} \in \argmin_{1 \le i \le s} (f \given \v{y})(\m{X}_{*}^\prime)$ as the $i$-th batch element, where $\m{X}_{s}^\prime$ denotes the optimized locations.
\end{enumerate}
As before, steps (2-5) we run independently. Optimization performance and runtimes are shown below.

\begin{figure*}
\center
\includegraphics[width=\textwidth]{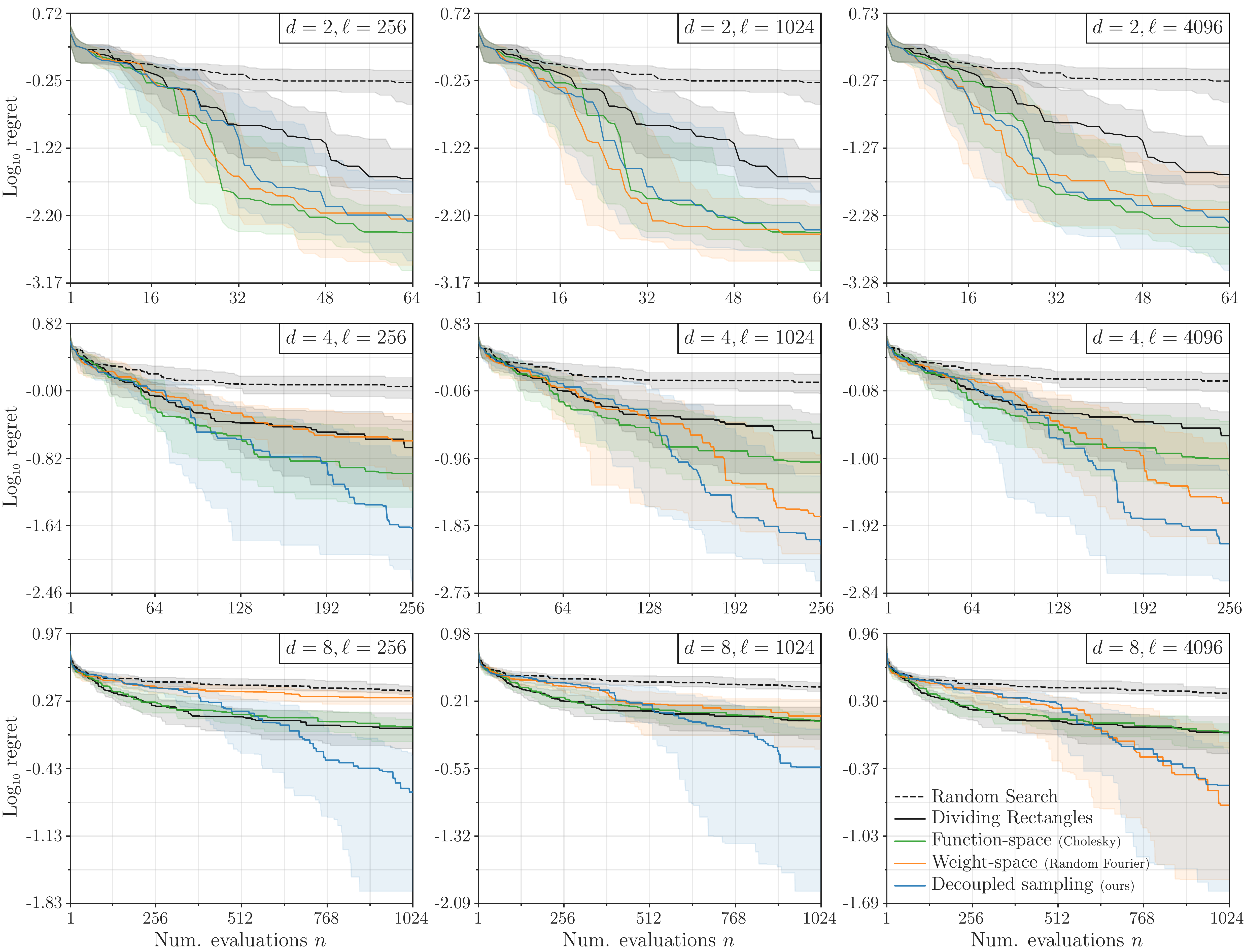}
\caption{Results for parallel Thompson sampling, shown as quartiles over 32 independent runs with matched seeds.}
\label{fig:thompson_sampling_results_full}
\end{figure*}

\begin{figure*}
\center
\includegraphics[width=\textwidth]{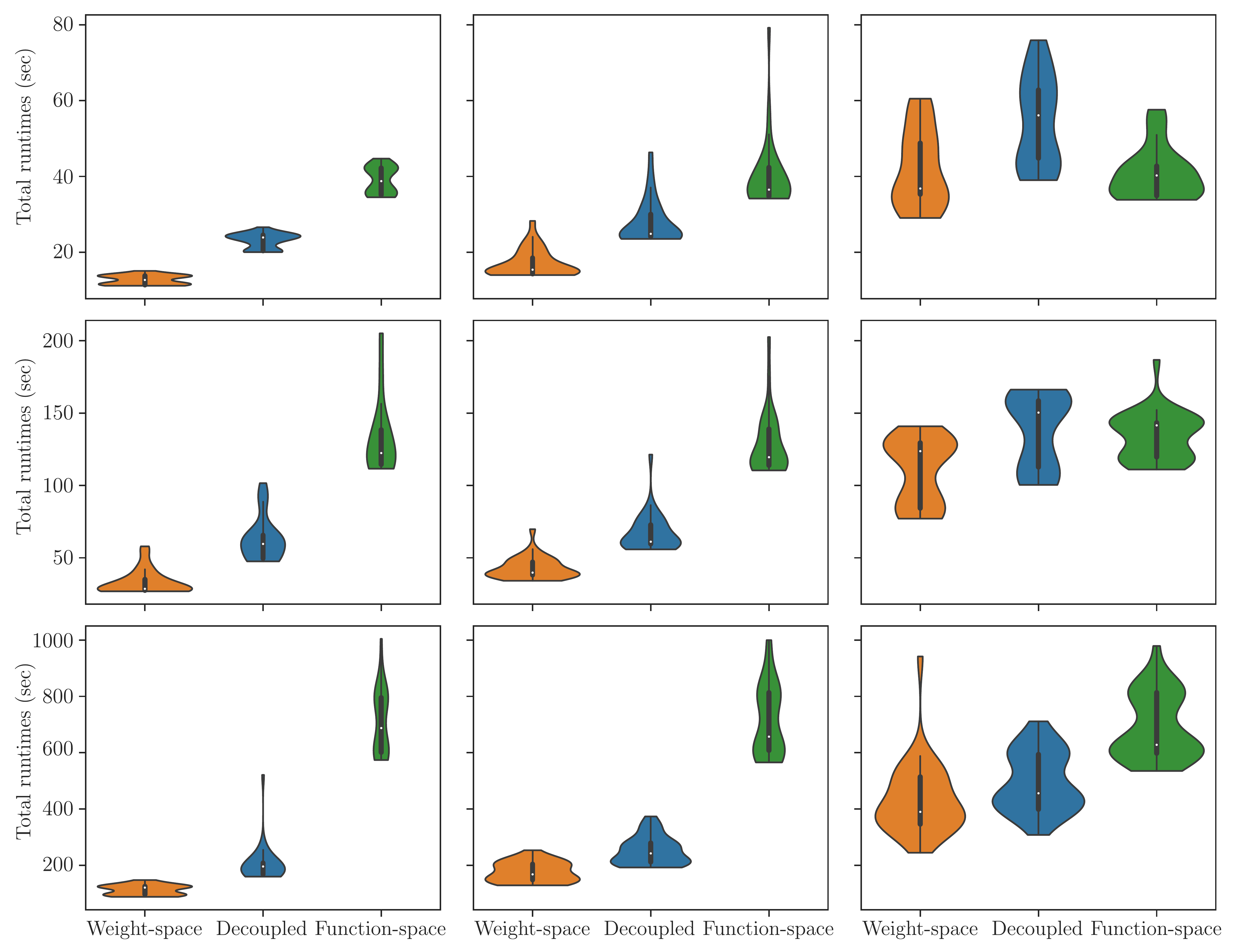}
\caption{Empirical distributions of per trial runtimes for parallel TS with different sampling strategies; subplots are 1-to-1 with those in Figure~\ref{fig:thompson_sampling_results_full}.}
\label{fig:thompson_sampling_runtimes_full}
\end{figure*}

\subsection{Dynamical systems}
\label{apdx:additional_experiments_dyn}
We investigated decoupled sampling's impact on (sequential) Monte Carlo methods' runtimes by using a sparse GP to simulate a simple dynamical system, the FitzHugh-Nagumo model neuron \cite{fitzhugh1961impulses,nagumo1962active} with diffusion coefficient $\m{\Sigma} = 0.01 \cdot \m{I}$. Training and simulation were both performed using a step size $\Delta t = 0.25$. 

During training, independent sparse GPs with $m=32$ shared inducing locations were fit to 3-dimensional inputs $\v{x}_{t} = [\v{s}_{t}, \v{c}_{t}]$, where $\v{s} \in [0, 1]^{2}$ denotes the (normalized) state vector at time $t$ and $\v{c} \in [0, 1]$ the coinciding (normalized) control input, with targets defined as the $i$-th element of the Euler-Maruyama transition vectors specified by \eqref{eqn:euler_maruyama}. Owing to the need to separate out signal from noise, the training set consisted of $10,000$ uniform random training points and training was performed using stochastic gradient descent.

At test time, a baseline was constructed by iteratively drawing drift vectors $f_{t+1} \given \v{f}_{1:t}$. At each iteration, the current input $\v{x}_{t}$ is added to the set of inducing locations $\m{Z}_{t+1} = \m{Z}_{t} \cup \{\v{x}_{t}\}$ and the $i$-th inducing distribution is augmented to incorporate the sampled drift as
\begin{align}
q^{(i)}_{t+1}(\v{u}) = \c{N}\left(
    \left[\begin{array}{c}
    \v{\mu}_{t}^{(i)} \\
    f_{t}^{(i)}
    \end{array}\right],
    \left[\begin{array}{cc}
    \m{\Sigma}_{t} - \v{v}\v{v}^\top & \m{0}\\
    \m{0} & 0
    \end{array}\right]
\right)
\end{align}
where $\v{v} = k_{t}(\v{x}_{t}, \m{Z}_{t}) k_{t}(\v{x}_{t}, \v{x}_{t})^{\nicefrac{-1}{2}}$ is defined in terms of the posterior covariance given the $m + t$ preceding inducing locations. When the inducing covariance is parameterized by its Cholesky factor, $\m{\Sigma}_{t+1}^{\nicefrac{1}{2}}$ can be directly computed via a rank-1 downdate \cite{gill1974methods,seeger2004low}. Since only the $m$-th leading principal submatrix of $\m{\Sigma}_{t+1}^{\nicefrac{1}{2}}$ needs to be modified (the remaining terms are all zero because $\v{f}_{t}$ is directly observed), this downdate incurs $\c{O}(m^{2})$ time complexity per iteration. In similar fashion, the prior covariance and its Cholesky factor may be maintained online. Here, however, as well as when computing posterior marginals, the matrices are no longer sparse, resulting in $\c{O}((m + t)^{2})$ cost per step. Overall, the iterative approach to unrolling scales cubically in the number of steps.

\end{document}